\definecolor{darkblue}{rgb}{0,0,0.5}
\definecolor{turquoise}{RGB}{64, 224, 208}
\DeclareFixedFont{\ttb}{T1}{txtt}{bx}{n}{12} 
\DeclareFixedFont{\ttm}{T1}{txtt}{m}{n}{12}  
\definecolor{deepblue}{rgb}{0,0,0.5}
\definecolor{deepred}{rgb}{0.6,0,0}
\definecolor{deepgreen}{rgb}{0,0.5,0}
\newcommand\pythonstyle{\lstset{
language=Python,
basicstyle=\ttfamily,
morekeywords={self, vn_entropy},              
keywordstyle=\ttfamily\color{deepblue},
emph={MyClass,__init__},          
emphstyle=\ttfamily\color{deepred},    
commentstyle=\ttfamily\color{olive},
stringstyle=\color{deepgreen},
frame=tb,                         
showstringspaces=false
}}
\newcommand\pythoninline[1]{{\pythonstyle\lstinline!#1!}}
\pgfplotsset{compat=newest} 
\pgfplotsset{compat=newest} 
\DeclareMathOperator{\Tr}{Tr}
\newcommand*{\addFileDependency}[1]{
  \typeout{(#1)}
  \@addtofilelist{#1}
  \IfFileExists{#1}{}{\typeout{No file #1.}}
}
\crefname{section}{Sec.}{Secs.}
\crefname{proposition}{Prop.}{Props.}
\crefname{lemma}{Lem.}{Lems.}
\crefname{model}{Mod.}{Mods.}
\crefname{appendix}{App.}{Apps.}
\crefname{theorem}{Thm.}{Thms.}
\theoremstyle{plain}
\newtheorem{theorem}{Theorem}[section]
\newtheorem{proposition}[theorem]{Proposition}
\newtheorem{lemma}[theorem]{Lemma}
\newtheorem{definition}[theorem]{Definition}
\theoremstyle{remark}
\newenvironment{FrameProposition}
  {\begin{mdframed}[linecolor=black!5,backgroundcolor=black!5,roundcorner=3pt,innerleftmargin=4pt,innerrightmargin=4pt,innertopmargin=6pt,innerbottommargin=6pt]\begin{proposition}}
  {\end{proposition}\end{mdframed}}
\newenvironment{FrameTheorem}
  {\begin{mdframed}[linecolor=black!5,backgroundcolor=black!5,roundcorner=3pt,innerleftmargin=4pt,innerrightmargin=4pt,innertopmargin=6pt,innerbottommargin=6pt]\begin{theorem}}
  {\end{theorem}\end{mdframed}}
\setlist{topsep=0pt}
\let\oldtextbf\textbf
\renewcommand{\textbf}[1]{\oldtextbf{\boldmath #1}}
\newcommand*{\transposed}{^\top}
\newcommand{\R}{\mathbb{R}}
\newcommand{\VNE}{\operatorname{VNE}}
\newcommand{\KLE}{\operatorname{KLE}}
\newcommand{\SE}{\operatorname{SE}}
\definecolor{applegreen}{rgb}{0.55, 0.71, 0.0}
\newcommand{\nipstitle}[1]{{%
    \def\toptitlebar{\hrule height4pt \vskip .25in \vskip -\parskip} 
    \def\bottomtitlebar{\vskip .29in \vskip -\parskip \hrule height1pt \vskip .09in} 
    \phantomsection\hsize\textwidth\linewidth\hsize%
    \vskip 0.1in%
    \toptitlebar%
    \begin{minipage}{\textwidth}%
        \centering{\LARGE\bf #1\par}%
    \end{minipage}%
    \bottomtitlebar%
    \addcontentsline{toc}{section}{#1}%
}}
\renewcommand{\paragraph}[1]{\textbf{#1}~~}
\title{Kernel Language Entropy: Fine-grained Uncertainty Quantification for LLMs from Semantic Similarities}
\newcommand{\sspace}{\hspace{10pt}}
\author{%
  Alexander Nikitin$^{1}$
  \sspace
  Jannik Kossen$^{2}$
  \sspace
  Yarin Gal$^{2}$
  \sspace
  Pekka Marttinen$^{1}$
\\[.8em]
$^1$ Department of Computer Science, Aalto University \\
$^2$ OATML, Department of Computer Science, University of Oxford\\
\texttt{alexander.nikitin@aalto.fi}\\
 \phantom{12}
\vspace{-.5cm}
}
\newcommand{\HO}[1]{\textcolor{orange}{#1}}
\begin{document}

\maketitle

\begin{abstract}
Uncertainty quantification in Large Language Models (LLMs) is crucial for applications where safety and reliability are important. In particular, uncertainty can be used to improve the trustworthiness of LLMs by detecting factually incorrect model responses, commonly called hallucinations. 
Critically, one should seek to capture the model's \emph{semantic uncertainty}, i.e., the uncertainty over the \emph{meanings} of LLM outputs, rather than uncertainty over lexical or syntactic variations that do not affect answer correctness.
To address this problem, we propose \emph{Kernel Language Entropy} (KLE), a novel method for uncertainty estimation in white- and black-box LLMs.
KLE defines positive semidefinite unit trace kernels to encode the \emph{semantic similarities} of LLM outputs and quantifies uncertainty using the von Neumann entropy.
It considers pairwise semantic dependencies between answers (or semantic clusters), providing more fine-grained uncertainty estimates than previous methods based on hard clustering of answers.
We theoretically prove that KLE generalizes the previous state-of-the-art method called semantic entropy and empirically demonstrate that it improves uncertainty quantification performance across multiple natural language generation datasets and LLM architectures.

\end{abstract}

\begin{figure}[htbp!]
     \centering
     \includegraphics[width=\textwidth]{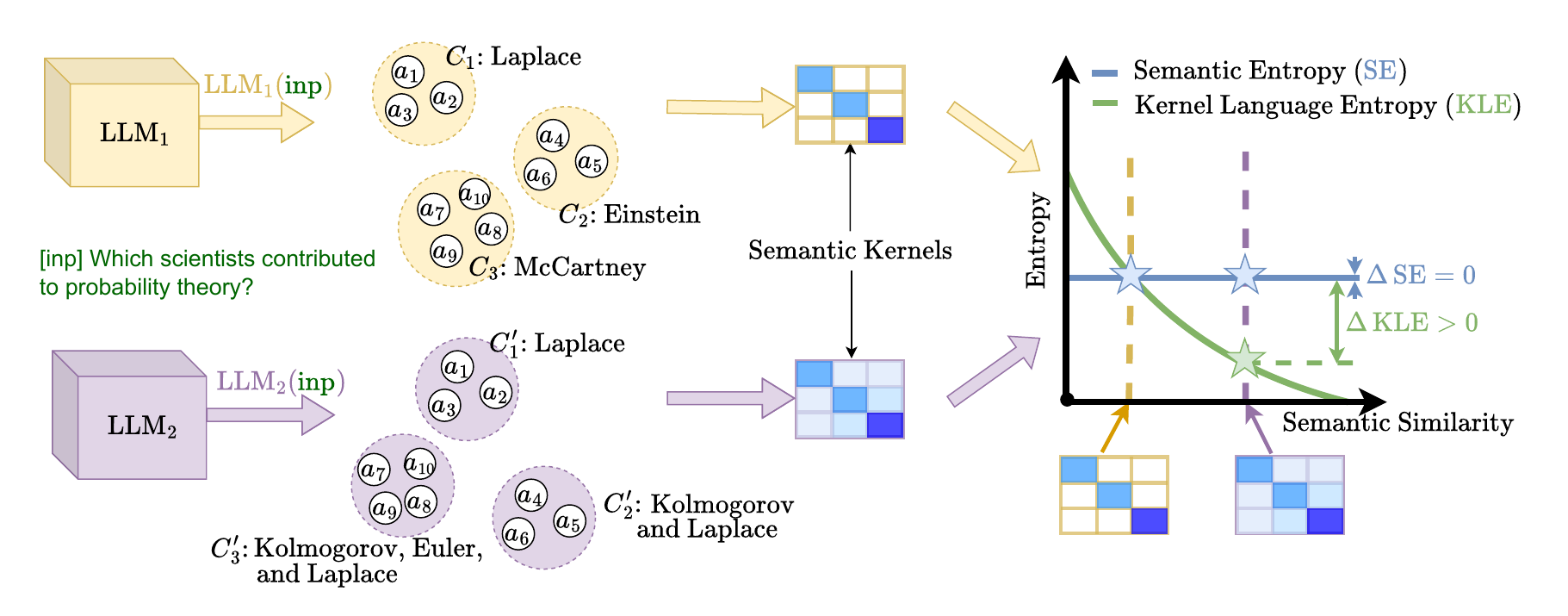}
     \vspace{-7mm}
     \caption{%
     \looseness=-1
     Illustration of Kernel Language Entropy (KLE). We here show a version of KLE called $\operatorname{KLE-c}$, which operates on semantic clusters.
     Given an input query and two different LLMs, we sample 10 answers from each model $a_1, \ldots, a_{10}$ and $a^{\prime}_1, \ldots, a^{\prime}_{10}$ and cluster them by semantic equivalence into clusters $C_1, \ldots, C_3$ and $C^{\prime}_1, \ldots, C^{\prime}_3$.
     For the sake of the example, we assume that the numbers and sizes of clusters, as well as individual cluster probabilities, are all equal $p(C_i|\text{inp}) = p(C_i^{\prime} | \text{inp})$ for all $i$.
     Then, semantic entropy would yield identical uncertainties for both LLMs.
     However, uncertainty should be lower for $\operatorname{LLM_2}$ because semantic ``similarity'' between the generations is much higher; i.e., the model is fairly confident that ``Kolmogorov'' and ``Laplace'' are good answers.
     KLE, explicitly accounts for the semantic similarity between texts using a kernel-based approach and correctly identifies that $\operatorname{LLM_2}$'s generations should be assigned lower uncertainty (see right).
}
     \label{fig:method_overview}
     \vspace{-15pt}
\end{figure}

\vspace{-15pt}
\section{Introduction}
\label{section:introduction}
\vspace{-2mm}

\looseness=-1
Large Language Models (LLMs) have demonstrated exceptional capabilities across a wide array of natural language processing tasks \citep{achiam2023gpt, team2023gemini, touvron2023llama}. This has led to their application in many domains, including medicine \citep{clusmann2023future}, education \citep{kasneci2023chatgpt}, and software development \citep{le2022coderl}.
Unfortunately, LLM generations suffer from so-called hallucinations, commonly defined as responses that are ``nonsensical or unfaithful to the provided source content'' \citep{ji2023survey, filippova2020controlled, maynez2020faithfulness}.
Hallucinations pose significant risks when LLMs are deployed to high-stakes applications, and methods that reliably detect them are sorely needed.

\looseness=-1
A promising direction to improve the reliability of LLMs is \emph{estimating the uncertainty} of model generations \citep{kuhn2023semantic,cole2023selectively,manakul2023selfcheckgpt,lin2023generating, hendrycks2021unsolved}.
As LLM predictions tend to be well-calibrated \citep{achiam2023gpt,kadavath2022language}, high predictive uncertainty is indicative of model errors or hallucinations in settings such as answering multiple-choice questions.
This allows us to prevent harmful outcomes by abstaining from prediction or by consulting human experts.
However, the best means of estimating uncertainty for free-form natural language generation remains an active research question.
The unique properties of LLMs and natural language preclude the application of established methods for uncertainty quantification~\citep{gal2016dropout, lakshminarayanan2017simple, liu2020simple, ovadia2019can, mukhoti2023deep}.

A particular challenge is that language outputs can contain multiple types of uncertainty, including lexical (which word is used), syntactic (how the words are ordered), and semantic (what a text means). For many problems, \emph{semantic} uncertainty is the desired quantity, as it pertains directly to the accuracy of the meaning of the generated response. However, measuring the uncertainty of the generation via token likelihoods conflates all types of uncertainty.
To address this, \citet{kuhn2023semantic} have recently introduced semantic entropy (SE), which estimates uncertainty as the predictive entropy of generated texts with respect to clusters of identical semantic meaning (we discuss this in more detail in \cref{section:background}). A critical limitation of SE is that it captures semantic relations between the generated texts only through equivalence relations.
This does not capture a \emph{distance metric} in the semantic space, which would allow one to account for more nuanced \emph{semantic similarity} between generations.
For instance, it separates ``apple'' as equally strongly from ``house'' as it will ``apple'' from ``granny smith'' even though the latter pair is more closely related. In this paper, we address this problem by incorporating a distance in the semantic space of generated answers into the uncertainty estimation.

We propose \textbf{Kernel Language Entropy (KLE)}.
KLE leverages semantic similarities by using a distance measure in the space of the generated answers, encoded by unit trace positive semidefinite kernels. 
We quantify uncertainty by measuring the von Neumann entropy of these kernels.
This approach allows us to incorporate a metric between generated answers or, alternatively, semantic clusters into the uncertainty estimation.
Our approach uses kernels to describe semantic spaces, making KLE more general and better at capturing the semantics of generated texts than the previous methods. We theoretically prove that our method is more expressive than semantic entropy, meaning there are cases where KLE, but not SE, can distinguish the uncertainty of generations. Importantly, our approach does not rely on token likelihood and works for both white-box and black-box LLMs.

Our work makes the following contributions towards better uncertainty quantification in LLMs:
\begin{itemize}[topsep=0pt,itemsep=0pt]
    \item We propose Kernel Language Entropy, a novel method for uncertainty quantification in natural language generation (\cref{section:theory}),
    \item We propose concrete design choices for our method that are effective in practice, for instance, graph kernels and weight functions (\cref{section:semantic_graph_kernels}),
    \item We prove that our method is a generalization of semantic entropy (\cref{proposition:texts_kernel}),
    \item We empirically compare our approach against baselines methods across several tasks and LLMs with up to 70B parameters (60 scenarios total), achieving SoTA results (\cref{section:experiments}).
\end{itemize}

We release the code and instructions for reproducing our results at \url{https://github.com/AlexanderVNikitin/kernel-language-entropy}. 

\vspace{-10pt}
\section{Background}
\label{section:background}
\vspace{-2mm}

\textbf{Uncertainty Estimation.} 
Information theory \citep{mackay2003information} offers a principled framework for quantifying the uncertainty of predictions as the predictive entropy of the output distribution:
\begin{equation}
    \smash{\operatorname{PE}(x)=H(Y \mid x)=-\int p(y \mid x) \log p(y \mid x) d y,} 
    \label{eq:predictive_entropy}
\end{equation}
where $Y$ is the output random variable, $x$ is the input, and $H(Y | x)$ is a conditional entropy which represents average uncertainty about $Y$ when $x$ is given.
Uncertainty is often categorized into aleatoric (data) and epistemic (knowledge) uncertainty.
Following previous work on uncertainty quantification in LLMs, we assume that LLMs capture both types of uncertainty \citep{kadavath2022language} and do not attempt to disambiguate them, as both epistemic and aleatoric uncertainty contribute to model errors.

\textbf{UQ in sequential models.} Let $S \in \mathcal{T}^N$ be a sequence of length $N$, consisting of tokens, $s_i \in \mathcal{T}$, where the set $\mathcal{T}$ denotes a vocabulary of tokens.
The probability of $S$ is then the joint probability of the tokens, obtained as the product of conditional token probabilities:
\begin{equation}
    p(S \mid x) = \prod\nolimits_{i}{p(s_i | s_{<i}, x)}.
    \label{eq:sequence_likelihood}
\end{equation}
Instead of \cref{eq:sequence_likelihood}, the geometric mean of token probabilities has proven to be successful in practice \citep{malinin2020uncertainty}. Using \cref{eq:predictive_entropy} and~\eqref{eq:sequence_likelihood}, we can define the predictive entropy of a sequential model.
\begin{definition}
    The predictive entropy for a random output sequence $S$ and input $x$ is
    \begin{equation}
        U(x)=H(S \mid x)=-\sum\nolimits_{s} p(s \mid x) \log (p(s \mid x)),
    \end{equation}
    where the sum is taken over all possible output sequences $s$.
\end{definition}

\looseness=-1
A downside of naive predictive entropy for Natural Language Generation (NLG) is that it measures uncertainty in the space of tokens while the uncertainty of interest lies in semantic space. As an illustrative example, consider two sets of $n$ answers, $S_i$ and $S^{\prime}_i$ sampled from two LLMs with equivalent token likelihood $p(S_i|x) = p(S^{\prime}_i|x)$ as a response to the question ``What is the capital of France?'' \citep{kuhn2023semantic}. Suppose the answers from the first LLM are various random cities (``Paris'', ``Rome'', etc.), and those from the second LLM are paraphrases of the correct answer ``It is Paris''. Naive predictive entropy computation can give similar values, even though the second LLM is not uncertain about the meaning of its answer.
\citet{kuhn2023semantic} have proposed semantic entropy to address this problem.

We first define the concept of semantic clustering.
Semantic clusters are equivalence classes obtained using a semantic equivalence relation, $E(\cdot, \cdot)$, which is reflexive, symmetric, and transitive and should capture semantic equivalence between input texts.
In practice, $E$ is computed using bi-directional entailment predictions from a Natural Language Inference (NLI) model, such as DeBERTa \citep{he2020deberta} or a prompted LLM, that classifies relations between pairs of texts as ``entailment,'' ``neutral,'' or ``contradiction''. 
Two texts are semantically equivalent if they entail each other bi-directionally.
Semantic clusters are obtained by greedily aggregating generations into clusters of equivalent meaning.
We can now define semantic entropy.
\begin{definition}
    \label{def:semantic_entropy}
    For an input $x$ and semantic clusters $C \in \Omega$, where $\Omega$ is a set of all semantic clusters, Semantic Entropy ($\operatorname{SE}$) is defined as 
    \begin{equation}
        \operatorname{SE}(x)=-\sum_{C \in \Omega} p(C \mid x) \log p(C \mid x)=-\sum_{C \in \Omega}\left(\left(\sum_{s \in c} p(s \mid x)\right) \log \left[\sum_{s \in C} p(s \mid x)\right]\right).
    \end{equation}
\end{definition}
In practice, it is not possible to calculate $\sum_C p(C \mid x) \log p(C \mid x)$ because of the intractable number of semantic clusters. Instead, SE uses a Rao-Blackwellized Monte Carlo estimator
\begin{equation}
    \label{eq:mc_se}
        \operatorname{SE}(x) \approx - \sum\nolimits_{i=1}^{M}{p^{\prime}(C_i | x)\log{p^{\prime}(C_i | x)}},
\end{equation}
where $C_i$ are $M$ clusters extracted from the generations and $p^{\prime}(C_i \mid x)$ is a normalized semantic probability, $p^{\prime}(C_i \mid x) = \sfrac{p(C_i | x)}{\sum_i{p(C_i | x)}}$, which we refer to as $p(C_i | x)$ in the following for simplicity.
SE can be extended to cases where token likelihoods are not available by approximating $p(C_i|x)$ with the fraction of generated texts in each cluster, $p(C_i|x) \approx \sum_{i=1}^{N}{\sfrac{\mathbb{I}(S_i \in C_i)}{N}}$.
We refer to this variant as \emph{Discrete Semantic Entropy}~\citep{nature_paper}.

\vspace{-8pt}
\section{Kernel Language Entropy}
\label{section:theory}
\vspace{-2mm}

\looseness=-1
This section introduces Kernel Language Entropy (KLE), our novel approach to computing semantic uncertainty that accounts for fine-grained similarities between generations for better uncertainty quantification. We introduce two variants of KLE: the first, simply called $\operatorname{KLE}$, operates directly on the generated texts, and the second, $\operatorname{KLE-c}$ operates on the space of semantic clusters.

\textbf{Motivating Example.} 
\Cref{fig:method_overview} illustrates the advantages of KLE (to be precise, the $\operatorname{KLE-c}$ variant) over other methods such as SE.
Imagine querying two LLMs such that the outputs of $\text{LLM}_1$ are all semantically different and those of $\text{LLM}_2$ are semantically similar \emph{but not equivalent}. 
For simplicity, we assume an equal amount of clusters between LLMs and equal likelihoods of clusters $p(C_i | \text{inp}) = p(C_i^{\prime}|\text{inp})$.
SE would not distinguish between those cases and, thus, would misleadingly predict equal uncertainty.
KLE on the other hand, will correctly assign lower uncertainty to the outputs of $\operatorname{LLM}_2$, its kernels accounting for the fact that $\operatorname{LLM}_2$ produces semantically similar outputs.


Before introducing KLE, we recall the definition of a positive semidefinite (PSD) kernel.
\begin{definition}
    \label{definition:kernel}
    For a set $\mathcal{X} \neq \emptyset$, a symmetric function $K: \mathcal{X} \times \mathcal{X} \rightarrow \mathrm{R}$ is called a PSD kernel if for all $n>0, x_i \in \mathcal{X}, \alpha_i \in \mathbb{R}$
    \begin{equation}
        \sum\nolimits_{i=1}^n\sum\nolimits_{j=1}^n \alpha_i \alpha_j K(x_i, x_j) \geq 0.
    \end{equation}
    For a finite set $\mathcal{X}$, a PSD kernel is a PSD matrix of the size $|\mathcal{X}|$.
\end{definition}
Next, we define \textbf{\emph{semantic kernels}}, denoted $K_\text{sem}$, as unit trace\footnote{Kernels with $\Tr[K] = 1$ are called \emph{unit trace kernels}.} positive semidefinite kernels over the finite domain of \emph{generated} texts. Unit trace PSD matrices are also called density matrices.
These kernels should, informally speaking, capture the semantic similarity\footnote{Or more broadly semantic \emph{relatedness}, including antonymy, meronymy, as well as semantic similarity \citep{budanitsky2006evaluating}.} between the texts such that $K(s_1, t_1) > K(s_2, t_2)$ if and only if texts $s_1$ and $t_1$ are more semantically related than texts $s_2$ and $t_2$. 
Analogously, we define semantic kernels over semantic clusters of texts, in which case the kernel should capture the semantic similarity between the clusters. In practice there are multiple ways to concretely specify a proper semantic kernel, and some options are described in Section \ref{section:semantic_graph_kernels}. 

\paragraph{The von Neumann Entropy.} 
We propose to use the von Neumann entropy (VNE) to evaluate the uncertainty associated with a semantic kernel.
\begin{definition}[Von Neumann Entropy]
    For a unit trace positive semidefinite matrix $A \in \mathbb{R}^{n \times n}$, the von Neumann entropy (VNE) \citep{von2018mathematical} is defined as 
    \begin{equation}
        \VNE(A) = -\Tr[A \log{A}].
    \end{equation}
\end{definition}
It can be shown that $\VNE(A) = \sum_i^n - \lambda_i \log \lambda_i$ where $\lambda_i, 1\leq i \leq n$ are the eigenvalues of $A$. Within this definition, we assume $0 \log 0 = 0$. This reformulation shows that VNE is, in fact, the Shannon entropy over the eigenvalues of a kernel. We can now define Kernel Language Entropy.
\begin{definition}[Kernel Language Entropy]
    Given a set of LLM generations $S_1, \ldots, S_N$, an input $x$, and semantic kernel $K_{\text{sem}}$ over these generations and input, we define \textbf{Kernel Language Entropy} ($\operatorname{KLE}$) as the von Neumann entropy of a semantic kernel $K_\text{sem}$: 
    \begin{equation}
        \label{eq:KLE}
        \operatorname{KLE}(x) = \VNE(K_\text{sem}).
    \end{equation}
\end{definition}
The von Neumann entropy has the following properties, which is aligned with the overarching goal of measuring the uncertainty of a set of generations.
\begin{FrameProposition}[Properties of the von Neumann Entropy \citep{bengtsson2017geometry}]
    The VNE of a unit trace positive semidefinite kernel has the following properties:
    \begin{enumerate}[topsep=0pt,itemsep=0pt]
        \item The VNE of a kernel with only one non-zero element is equal to 0.
        \item The VNE is invariant under changes of basis $U$: $\VNE(K) = \VNE(U K U^{\transposed})$.
        \item The VNE is concave. For a set of positive coefficients $\alpha_i$, $\sum_{i=1}^k \alpha_i = 1$, and density matrices $K_i$, it holds that $\VNE\left(\sum\nolimits_{i=1}^{k} \alpha_i K_i\right) \geq \sum\nolimits_{i=1}^{k}{\alpha_i \VNE(K_i)}$.
    \end{enumerate}
\end{FrameProposition}

Let us briefly discuss the practical implications of these properties. \textbf{Property 1} states that if an LLM outputs a single answer (for $\operatorname{KLE}$) or a semantic cluster (for $\operatorname{KLE-c}$), the VNE is zero, indicating high certainty. \textbf{Property 2} is significant as it allows the VNE to be calculated in practice as the Shannon entropy of the diagonal elements of an orthogonalized kernel, which can be interpreted as a disentangled representation of a semantic kernel. 
\textbf{Property 3} states that entropy is concave, meaning that the entropy of a combined system is greater than or equal to the entropy of its individual parts, a common requirement for entropy metrics. 
The intuition behind our use of VNE for LLMs also relates to its origins in quantum information.

\begin{wrapfigure}{R}{0.5\textwidth}
\vspace{-12pt}
\begin{minipage}{0.5\textwidth}
\begin{algorithm}[H]
    \caption{Kernel Language Entropy}
    \label{listing:kle}
    \begin{algorithmic}[1]
        \Require $\operatorname{LLM}$, Input $x \in \mathcal{T}^L$, Number of samples $n$, Boolean $\operatorname{kle-c}$ indicating variant, Semantic kernels $K_i$
        \State Initialize a \emph{multiset} of answers $\mathcal{O} \gets \emptyset$
        \For{$k \gets 1$ to $n$} \algorithmiccomment{Sampling $n$ answers}
            \State Add $\operatorname{LLM}(x)$ to $\mathcal{O}$ 
        \EndFor
        \If{$\operatorname{kle-c}$}
            \State Update $\mathcal{O} \gets \operatorname{cluster(\mathcal{O})}$ \algorithmiccomment{as in \citep{kuhn2023semantic}}
        \EndIf
        \State Combine $K_i(\mathcal{O}, \mathcal{O})$ in $K_\text{sem}$ \algorithmiccomment{see \cref{section:combination}}
        \State Return $\VNE(K_\text{sem})$ \algorithmiccomment{\cref{eq:KLE}}
    \end{algorithmic}
\end{algorithm}
\end{minipage}
\vspace{-10pt}
\end{wrapfigure}

\looseness=-1
\paragraph{The VNE in Quantum Information Theory.} 
In quantum information theory, the states of a quantum system (or pure states) are defined as unit vectors in $\mathbb{C}^{N}$. However, experiments often result in statistical mixtures of pure quantum states, represented as density matrices. 
The VNE is used to evaluate the entropy of the mixed states.
Analogously, we can think of KLE as considering each answer as a mixture of pure ``semantic meanings'', measuring the entropy of this mixture. 
We refer the reader to \citet{aaronson2022introduction} for further background reading on the VNE and quantum information theory.

\paragraph{KLE-c.}\label{section:kle-c}
Instead of defining semantic kernels directly over individual model generations, we can also apply KLE to clusters of semantic equivalence.
We call this variant of our method $\operatorname{KLE-c}$.
Although $\operatorname{KLE}$ is more general than $\operatorname{KLE-c}$ for non-trivial clusterings, KLE-c can provide practical value as it is cheaper to compute and more interpretable due to its smaller kernel sizes.

\textbf{Algorithm.} \Cref{listing:kle} provides a generic description of the steps required to compute KLE.
We describe the practical details for defining and combining semantic kernels later in \cref{section:combination}.

\textbf{Computational Complexity}. The computational complexity of KLE is approximately identical to SE which requires sampling from an LLM $N$ times and running the entailment model $O(N^2)$ times.
Additionally, KLE requires $O(N^3)$ elementary operations for kernel and VNE calculation.
The actual cost of this is negligible in comparison to the forward passes through the LLM or entailment model.

\vspace{-1.5mm}
\subsection{Semantic Graph Kernels}
\label{section:semantic_graph_kernels}
%
%
This section describes a practical approach for constructing semantic kernels over LLM generations or semantic clusters. Concretely, we apply NLI models to construct \textit{semantic graphs} over the LLM outputs and then borrow from graph kernel theory to construct kernels from these graphs.

\paragraph{Graph Theory Preliminaries.} First, let us recall the basics of graph theory. A graph is a pair of two sets $G = (V, E)$, where $V=\{1, \ldots, n\}$ is a set of $n$ vertices and $E \subseteq V \times V$ is a set of edges. A graph is called weighted when a weight is assigned to each edge, and the weight matrix $W_{ij}$ contains weights between nodes $i$ and $j$. For unweighted graphs, we can use a binary adjacency matrix to encode edges between nodes. The degree matrix $D$ is a diagonal $|V| \times |V|$ matrix with $\smash{D_{ii} = \sum\nolimits_{j=1}^{|V|} W_{ij}}$.
The \textit{graph Laplacian} is defined as $L = D - W$. $L$ is a positive semidefinite matrix, and eigenvalues of $L$ are often used to study the structure of graphs \citep{chung1997spectral, von2007tutorial}.

\paragraph{Semantic Graph.} We define semantic graphs as graphs over LLM generations ($G_{\text{sem}}$) or semantic clusters ($G_{\text{sem-c}}$). For $G_{\text{sem}}$, edges can be defined as a function of NLI predictions in both directions: $W_{ij} = f(\operatorname{NLI}(S_i, S_j), \operatorname{NLI}(S_j, S_i))$, where $\operatorname{NLI}$ are the predicted probabilities for \textit{entailment}, \textit{neutral}, and \textit{contradiction} for $S_i$ and $S_j$. For example, $f$ could be the weighted sum over the predicted probabilities for entailment and neutral classes.
For $G_{\text{semc-c}}$, the weights between the clusters are computed by summing the entailment predictions over the generations assigned to the clusters,  $W_{ij} = \sum_{s \in C_i} \sum_{t \in C_j} f(\operatorname{NLI}(s, t), \operatorname{NLI}(t, s))$.

\begin{figure}[t]
     \centering
     \includegraphics[width=\textwidth]{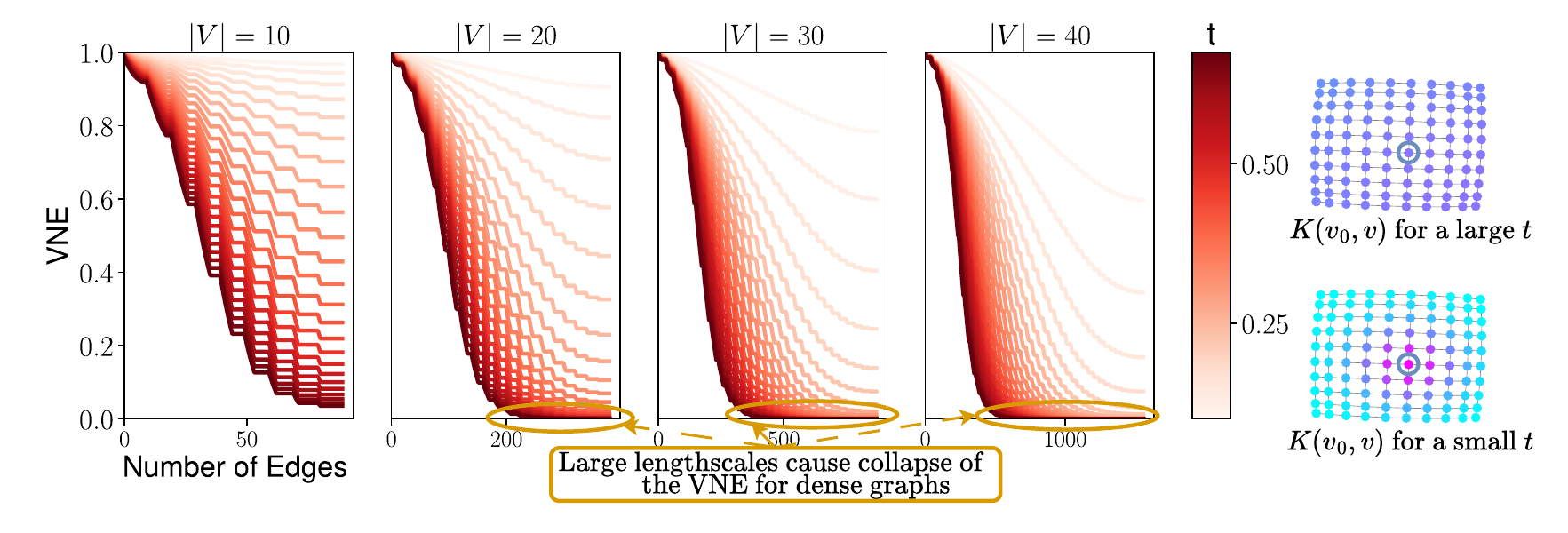}
     \vspace{-6mm}
     \caption{Entropy Convergence Plots for heat kernels. For graphs of various sizes $|V|$, we grow the number of edges and examine the VNE.
     For large lengthscales $t$, corresponding to darker colored curves, the VNE quickly converges to zero. We can use these plots to determine kernel hyperparameters without validation sets. The VNE is scaled to start at 1 for visualization purposes.
     }
     \label{fig:convergence_plots}
     \vspace{-10pt}
\end{figure}

\paragraph{Graph Kernels.} 
When a semantic graph is obtained, KLE calculates graph kernels over semantic graph nodes to compute a distance measure. 
Since graphs are discrete and finite, any positive semidefinite matrix would be a kernel over the graph. 
However, we seek kernels that exploit knowledge about the graph structure.
We therefore adopt Partial Differential Equation (PDE) and Stochastic Partial Differential Equation (SPDE) approaches to graph kernels \citep{kondor2002diffusion, borovitskiy2021matern, nikitin2022non}. 
If $u \in \R^n$ is a signal over the nodes of a graph, the \textbf{heat kernel} is a solution to the partial differential equation $\sfrac{\partial u}{\partial t} + Lu = 0$ and the \textbf{Mat\'ern kernel} is a solution to the stochastic differential equation, $(\sfrac{2 \nu}{\kappa^2}+L)^{\frac{\nu}{2}} u=w$, where $w$ is white noise over the graph nodes and $L$ is the graph Laplacian defined above. The corresponding solutions to these equations are:
\begin{equation}
    K_t = e^{-tL} \quad \textsc{[heat]} \quad\quad\quad\quad\quad\quad\quad\quad\quad K_{\nu \kappa }=(\sfrac{2 \nu}{\kappa^2} I+L)^{-\nu} \quad \textsc{[mat\'ern]}.
\end{equation}
These kernels allow for the incorporation of a distance measure that reflects the graph's locality properties (right part of \cref{fig:convergence_plots}). For example, the Taylor series of the heat kernel can be shown to be equal to a sum of powers of random walk matrices. Both kernels have hyperparameters: lengthscales $t$ in the heat kernel and $\kappa$ in Mat\'ern kernels, and $\nu$ in the Mat\'ern kernel, often interpreted as smoothness.
The scaled eigenvalues of the Mat\'ern kernel converge to the eigenvalues of the heat kernel \citep{borovitskiy2021matern} when $\nu$ goes to infinity. 
Mat\'ern kernels provide more flexibility at the cost of the additional parameter. 
Note that any kernel can be normalized into a unit trace kernel via $\smash{K(x, y)\leftarrow K(x, y) (K(x, x) K(y, y))^{-1/2}/N}$, where $N$ is the size of $K$.
We refer to \citep{kondor2002diffusion, nikitin2022non, borovitskiy2021matern} for further background reading on graph kernels.

\paragraph{Kernel Hyperparameters.} 
We propose two ways to select the hyperparameters of the heat and Mat\'ern kernels: either by maximizing the validation set performance or by selecting parameters from what we call \emph{Entropy Convergence Plots}, illustrated in \cref{fig:convergence_plots}.
We obtain these plots by defining a set of progressively denser graphs $G_1 \prec \ldots \prec G_K$. These can be obtained by starting from a graph without edges and a fixed number of vertices and adding new edges either randomly or by filling in the adjacencies of each node sequentially. We then plot the VNE against the number of edges in the graphs $G_i$.
We analyze the von Neumann entropy over these plots to avoid pathologies connected to the fact that for large lengthscales, the VNE converges rather quickly and such behavior should generally be avoided. 
For all remaining values, we can either choose hyperparameters randomly from the range of non-collapsing hyperparameters or rely on prior domain knowledge.

\paragraph{Kernel Combination.}\label{section:combination} 
KLE offers the additional flexibility of combining kernels from various methods (e.g., multiple NLI models, different graph kernels, or other methods). 
For example, we can combine multiple kernels using convex combinations, $\smash{K = \sum_{i=1}^P \alpha_i K_i}$, where $\smash{\sum_{i=1}^{P}{\alpha_i} = 1}$.

\vspace{-1mm}
\subsection{Kernel Language Entropy Generalizes Semantic Entropy}
\vspace{-1mm}
The semantic kernels used in KLE are more informative than the semantic equivalence relations used in SE \citep{kuhn2023semantic}. The next theorem shows that KLE can recover SE for any semantic clustering.
\begin{FrameTheorem}[KLE and KLE-c generalize SE]
    \label{proposition:texts_kernel}
    For any semantic clustering, there exists a semantic kernel over texts $K_{\text{sem}}(s, s^{\prime})$ such that the VNE of this kernel is equal to semantic entropy (computed as in \cref{eq:mc_se}). Moreover, there exists a semantic kernel over clusters $K_{\text{sem}}(c, c^{\prime})$ such that the VNE of this kernel is equal to SE.
\end{FrameTheorem}
\begin{proof}[Proof Sketch]
For any semantic clustering, we consider a kernel with a block diagonal structure. Each block corresponds to a semantic cluster and cluster likelihoods are normalized by the size of the cluster, $\sfrac{p(C_i | x)}{m_i}$. This is a valid semantic kernel and the KLE for this kernel equals the SE. \cref{appendix:proposition:texts_kernel} and \cref{appendix:proposition:sem_clusters_kernel} in the Appendix contain the detailed proofs.
\end{proof}
The proof of \cref{proposition:texts_kernel} shows that the block diagonal semantic kernels used with KLE can recover semantic entropy for any clustering. However, there are other kernels available that allow KLE to be more expressive than SE.
%
%
%
Comparing KLE and KLE-c, we find that KLE is more general than KLE-c for any non-trivial clustering.

\vspace{-2mm}
\section{Related Work} 
\label{section:related_work}

In the context of machine learning, the VNE has been studied theoretically\citep{bach2022information}, applied to GAN regularization \citep{kim2023vne}, and the exponential of the VNE has been used for effective rank and sample diversity analysis \citep{roy2007effective,friedman2023vendi}.

The first attempts at estimating the entropy of language date back to the 1950s \citep{shannon1951prediction}, and, today, techniques for uncertainty quantification are widely used in natural language processing. For instance, \citet{desai2020calibration} and \citet{jiang2021can} presented calibration techniques for classification tasks. \citet{xiao2019quantifying} empirically showed that, for various tasks including sentiment analysis and named entity recognition, measuring model uncertainty can be used to improve performance.
Calibration techniques have also been applied in machine translation tasks to improve accuracy \citep{kumar2019calibration}.

\citet{malinin2020uncertainty} discussed the challenges of estimating uncertainty in sequential models.
Several previous works have queried LLMs to elicit statements about uncertainty, either via fine-tuning or by directly including previous LLM generations in the prompt \citep{kadavath2022language, chen2023quantifying, mielke2022reducing, lin2022teaching, mielke2020linguistic, ganguli2023capacity, ren2023self, tian2023just, cohen2023lm,xiao2021hallucination,kuhn2023semantic}. \citet{zhang2024luq} studied UQ for long text generation. \citet{quach2023conformal} used conformal predictions to quantify LLM uncertainty, which is orthogonal to the approach we pursue here.
\citet{yang2023bayesian} have shown that Bayesian modeling of LLMs using low-rank Laplace approximations improves calibration in small-scale multiple-choice settings. \citet{lin2023generating} applied spectral graph analysis to graphs of answers for black-box LLMs.
\citet{aichberger2024many} proposed a new method for sampling diverse answers from LLMs; more diverse sampling strategies could improve KLE as well. 

There are a variety of ways besides model uncertainty to detect hallucinations in LLMs such as querying external knowledge bases \citep{feldman2023trapping, li2023chain, varshney2023stitch}, hidden state interventions \citep{zou2023representation, hernandez2023measuring, liu2023context}, using probes \citep{burns2024discovering, li2024inference, macdiarmid2024sleeperagentprobes}, or applying fine-tuning \citep{kang2024unfamiliar, tian2023finetuning}. KLE is complementary to many of these directions and focuses on estimating more fine-grained semantic uncertainty. It can either be used to improve these approaches or be combined with them sequentially.

\vspace{-2mm}
\section{Experiments}
\label{section:experiments}

\begin{table*}[t]
\centering
\caption{Detailed experimental results for Llama 2 70B Chat and Falcon 40B Instruct.}
\label{table:exp_results_llama_falcon}
\adjustbox{max width=1\textwidth}{

\begin{tabular}{c|lllllllllll}
 & \multirow[c]{2}{*}{\textbf{Method}} & \multicolumn{2}{c}{\textbf{BioASQ} \citep{krithara2023bioasq}} & \multicolumn{2}{c}{\textbf{NQ} \citep{kwiatkowski2019natural}} & \multicolumn{2}{c}{\textbf{SQuAD} \citep{rajpurkar2018know}} & \multicolumn{2}{c}{\textbf{SVAMP} \citep{patel-etal-2021-nlp}} & \multicolumn{2}{c}{\textbf{Trivia QA} \citep{joshi2017triviaqa}} \\
 &  & \multicolumn{1}{c}{ \textbf{AUROC}} &\multicolumn{1}{c}{ \textbf{AUARC} } & \multicolumn{1}{c}{ \textbf{AUROC} } & \multicolumn{1}{c}{ \textbf{AUARC} } & \multicolumn{1}{c}{ \textbf{AUROC} } & \multicolumn{1}{c}{ \textbf{AUARC} } & \multicolumn{1}{c}{ \textbf{AUROC} } & \multicolumn{1}{c}{\textbf{AUARC}} & \multicolumn{1}{c}{ \textbf{AUROC} } & \multicolumn{1}{c}{\textbf{AUARC}} \\
\cline{1-12} \noalign{\vspace{0.5ex}} 
 \multirow[c]{7}{*}{\rotatebox{90}{Llama 2 70B Chat}} & SE \citep{kuhn2023semantic} & 0.74 \textcolor{gray}{\footnotesize $\pm$ 0.04} & 0.90 \textcolor{gray}{\footnotesize $\pm$ 0.01} & 0.71 \textcolor{gray}{\footnotesize $\pm$ 0.03} & 0.47 \textcolor{gray}{\footnotesize $\pm$ 0.03} & 0.66 \textcolor{gray}{\footnotesize $\pm$ 0.03} & 0.65 \textcolor{gray}{\footnotesize $\pm$ 0.03} & 0.62 \textcolor{gray}{\footnotesize $\pm$ 0.03} & 0.61 \textcolor{gray}{\footnotesize $\pm$ 0.03} & 0.77 \textcolor{gray}{\footnotesize $\pm$ 0.03} & 0.79 \textcolor{gray}{\footnotesize $\pm$ 0.02} \\
  & DSE \citep{kuhn2023semantic} & 0.75 \textcolor{gray}{\footnotesize $\pm$ 0.04} & 0.90 \textcolor{gray}{\footnotesize $\pm$ 0.01} & 0.71 \textcolor{gray}{\footnotesize $\pm$ 0.03} & 0.46 \textcolor{gray}{\footnotesize $\pm$ 0.03} & 0.66 \textcolor{gray}{\footnotesize $\pm$ 0.03} & 0.65 \textcolor{gray}{\footnotesize $\pm$ 0.03} & 0.63 \textcolor{gray}{\footnotesize $\pm$ 0.03} & 0.61 \textcolor{gray}{\footnotesize $\pm$ 0.03} & 0.77 \textcolor{gray}{\footnotesize $\pm$ 0.03} & 0.79 \textcolor{gray}{\footnotesize $\pm$ 0.02} \\
  & PE \citep{malinin2020uncertainty}  & 0.69 \textcolor{gray}{\footnotesize $\pm$ 0.04} & 0.90 \textcolor{gray}{\footnotesize $\pm$ 0.01} & 0.67 \textcolor{gray}{\footnotesize $\pm$ 0.03} & 0.44 \textcolor{gray}{\footnotesize $\pm$ 0.03} & 0.65 \textcolor{gray}{\footnotesize $\pm$ 0.03} & 0.65 \textcolor{gray}{\footnotesize $\pm$ 0.03} & 0.59 \textcolor{gray}{\footnotesize $\pm$ 0.03} & 0.58 \textcolor{gray}{\footnotesize $\pm$ 0.03} & 0.61 \textcolor{gray}{\footnotesize $\pm$ 0.03} & 0.73 \textcolor{gray}{\footnotesize $\pm$ 0.03} \\
  & P(True) \citep{kadavath2022language} & 0.86 \textcolor{gray}{\footnotesize $\pm$ 0.03} & \textbf{0.92} \textcolor{gray}{\footnotesize $\pm$ 0.01} & \textbf{0.78} \textcolor{gray}{\footnotesize $\pm$ 0.03} & 0.50 \textcolor{gray}{\footnotesize $\pm$ 0.03} & 0.69 \textcolor{gray}{\footnotesize $\pm$ 0.03} & \textbf{0.68} \textcolor{gray}{\footnotesize $\pm$ 0.03} & 0.74 \textcolor{gray}{\footnotesize $\pm$ 0.02} & 0.68 \textcolor{gray}{\footnotesize $\pm$ 0.03} & 0.76 \textcolor{gray}{\footnotesize $\pm$ 0.03} & 0.79 \textcolor{gray}{\footnotesize $\pm$ 0.02} \\
  & ER & 0.70 \textcolor{gray}{\footnotesize $\pm$ 0.05} & 0.89 \textcolor{gray}{\footnotesize $\pm$ 0.01} & 0.58 \textcolor{gray}{\footnotesize $\pm$ 0.03} & 0.39 \textcolor{gray}{\footnotesize $\pm$ 0.03} & 0.63 \textcolor{gray}{\footnotesize $\pm$ 0.03} & 0.64 \textcolor{gray}{\footnotesize $\pm$ 0.03} & 0.68 \textcolor{gray}{\footnotesize $\pm$ 0.03} & 0.64 \textcolor{gray}{\footnotesize $\pm$ 0.03} & 0.76 \textcolor{gray}{\footnotesize $\pm$ 0.03} & 0.79 \textcolor{gray}{\footnotesize $\pm$ 0.02} \\
  & \textbf{$\KLE(K_{\textsc{heat}})$} & 0.87 \textcolor{gray}{\footnotesize $\pm$ 0.03} & \textbf{0.92} \textcolor{gray}{\footnotesize $\pm$ 0.01} & \textbf{0.78} \textcolor{gray}{\footnotesize $\pm$ 0.02} & \textbf{0.51} \textcolor{gray}{\footnotesize $\pm$ 0.03} & \textbf{0.71} \textcolor{gray}{\footnotesize $\pm$ 0.03} & \textbf{0.68} \textcolor{gray}{\footnotesize $\pm$ 0.03} & \textbf{0.76} \textcolor{gray}{\footnotesize $\pm$ 0.02} & \textbf{0.69} \textcolor{gray}{\footnotesize $\pm$ 0.03} & \textbf{0.84} \textcolor{gray}{\footnotesize $\pm$ 0.03} & \textbf{0.82} \textcolor{gray}{\footnotesize $\pm$ 0.02} \\
 & \textbf{$\KLE(K_{\textsc{full}})$} & \textbf{0.88} \textcolor{gray}{\footnotesize $\pm$ 0.03} & \textbf{0.92} \textcolor{gray}{\footnotesize $\pm$ 0.01} & 0.77 \textcolor{gray}{\footnotesize $\pm$ 0.02} & 0.50 \textcolor{gray}{\footnotesize $\pm$ 0.03} & 0.70 \textcolor{gray}{\footnotesize $\pm$ 0.03} & \textbf{0.68} \textcolor{gray}{\footnotesize $\pm$ 0.03} & 0.70 \textcolor{gray}{\footnotesize $\pm$ 0.03} & 0.65 \textcolor{gray}{\footnotesize $\pm$ 0.03} & 0.80 \textcolor{gray}{\footnotesize $\pm$ 0.03} & 0.81 \textcolor{gray}{\footnotesize $\pm$ 0.02} \\
\cline{1-12} 
\noalign{\vspace{0.5ex}} 
\multirow[c]{7}{*}{\rotatebox{90}{Falcon 40B Instr}} & SE \citep{kuhn2023semantic} & 0.85 \textcolor{gray}{\footnotesize $\pm$ 0.02} & 0.90 \textcolor{gray}{\footnotesize $\pm$ 0.01} & \textbf{0.78} \textcolor{gray}{\footnotesize $\pm$ 0.03} & \textbf{0.43} \textcolor{gray}{\footnotesize $\pm$ 0.03} & 0.66 \textcolor{gray}{\footnotesize $\pm$ 0.03} & 0.63 \textcolor{gray}{\footnotesize $\pm$ 0.03} & 0.66 \textcolor{gray}{\footnotesize $\pm$ 0.03} & 0.63 \textcolor{gray}{\footnotesize $\pm$ 0.03} & 0.79 \textcolor{gray}{\footnotesize $\pm$ 0.03} & 0.72 \textcolor{gray}{\footnotesize $\pm$ 0.03} \\
 & DSE \citep{kuhn2023semantic} & 0.85 \textcolor{gray}{\footnotesize $\pm$ 0.02} & 0.89 \textcolor{gray}{\footnotesize $\pm$ 0.01} & 0.77 \textcolor{gray}{\footnotesize $\pm$ 0.03} & 0.40 \textcolor{gray}{\footnotesize $\pm$ 0.03} & 0.66 \textcolor{gray}{\footnotesize $\pm$ 0.03} & 0.62 \textcolor{gray}{\footnotesize $\pm$ 0.03} & 0.67 \textcolor{gray}{\footnotesize $\pm$ 0.03} & 0.61 \textcolor{gray}{\footnotesize $\pm$ 0.03} & 0.79 \textcolor{gray}{\footnotesize $\pm$ 0.03} & 0.71 \textcolor{gray}{\footnotesize $\pm$ 0.03} \\
 & PE \citep{malinin2020uncertainty}  & 0.75 \textcolor{gray}{\footnotesize $\pm$ 0.03} & 0.87 \textcolor{gray}{\footnotesize $\pm$ 0.01} & 0.71 \textcolor{gray}{\footnotesize $\pm$ 0.03} & 0.38 \textcolor{gray}{\footnotesize $\pm$ 0.03} & 0.63 \textcolor{gray}{\footnotesize $\pm$ 0.03} & 0.60 \textcolor{gray}{\footnotesize $\pm$ 0.03} & 0.59 \textcolor{gray}{\footnotesize $\pm$ 0.03} & 0.57 \textcolor{gray}{\footnotesize $\pm$ 0.03} & 0.68 \textcolor{gray}{\footnotesize $\pm$ 0.03} & 0.66 \textcolor{gray}{\footnotesize $\pm$ 0.03} \\
 & P(True) \citep{kadavath2022language} & 0.87 \textcolor{gray}{\footnotesize $\pm$ 0.03} & 0.89 \textcolor{gray}{\footnotesize $\pm$ 0.01} & 0.71 \textcolor{gray}{\footnotesize $\pm$ 0.03} & 0.37 \textcolor{gray}{\footnotesize $\pm$ 0.03} & 0.66 \textcolor{gray}{\footnotesize $\pm$ 0.03} & 0.61 \textcolor{gray}{\footnotesize $\pm$ 0.03} & 0.73 \textcolor{gray}{\footnotesize $\pm$ 0.03} & 0.67 \textcolor{gray}{\footnotesize $\pm$ 0.03} & 0.72 \textcolor{gray}{\footnotesize $\pm$ 0.03} & 0.69 \textcolor{gray}{\footnotesize $\pm$ 0.03} \\
 & ER & 0.74 \textcolor{gray}{\footnotesize $\pm$ 0.04} & 0.85 \textcolor{gray}{\footnotesize $\pm$ 0.02} & 0.73 \textcolor{gray}{\footnotesize $\pm$ 0.03} & 0.39 \textcolor{gray}{\footnotesize $\pm$ 0.03} & 0.63 \textcolor{gray}{\footnotesize $\pm$ 0.03} & 0.61 \textcolor{gray}{\footnotesize $\pm$ 0.03} & 0.75 \textcolor{gray}{\footnotesize $\pm$ 0.02} & \textbf{0.68} \textcolor{gray}{\footnotesize $\pm$ 0.03} & 0.76 \textcolor{gray}{\footnotesize $\pm$ 0.03} & 0.69 \textcolor{gray}{\footnotesize $\pm$ 0.03} \\
 & \textbf{$\KLE(K_{\textsc{heat}})$} & \textbf{0.92} \textcolor{gray}{\footnotesize $\pm$ 0.01} & \textbf{0.91} \textcolor{gray}{\footnotesize $\pm$ 0.01} & 0.76 \textcolor{gray}{\footnotesize $\pm$ 0.03} & 0.42 \textcolor{gray}{\footnotesize $\pm$ 0.03} & \textbf{0.70} \textcolor{gray}{\footnotesize $\pm$ 0.03} & \textbf{0.66} \textcolor{gray}{\footnotesize $\pm$ 0.03} & \textbf{0.77} \textcolor{gray}{\footnotesize $\pm$ 0.02} & \textbf{0.68} \textcolor{gray}{\footnotesize $\pm$ 0.03} & \textbf{0.80} \textcolor{gray}{\footnotesize $\pm$ 0.02} & \textbf{0.74} \textcolor{gray}{\footnotesize $\pm$ 0.03} \\
 & \textbf{$\KLE(K_{\textsc{full}})$} & 0.90 \textcolor{gray}{\footnotesize $\pm$ 0.02} & \textbf{0.91} \textcolor{gray}{\footnotesize $\pm$ 0.01} & \textbf{0.78} \textcolor{gray}{\footnotesize $\pm$ 0.03} & \textbf{0.43} \textcolor{gray}{\footnotesize $\pm$ 0.03} & 0.69 \textcolor{gray}{\footnotesize $\pm$ 0.03} & 0.65 \textcolor{gray}{\footnotesize $\pm$ 0.03} & 0.69 \textcolor{gray}{\footnotesize $\pm$ 0.03} & 0.64 \textcolor{gray}{\footnotesize $\pm$ 0.03} & \textbf{0.80} \textcolor{gray}{\footnotesize $\pm$ 0.03} & 0.73 \textcolor{gray}{\footnotesize $\pm$ 0.03} \\
\bottomrule
\end{tabular}}

\end{table*}

\textbf{Datasets and Models.}
Our experiments span over 60 dataset-model pairs.
We evaluate on the following tasks covering different domains of natural language generation:  general knowledge (TriviaQA \citep{joshi2017triviaqa} and SQuAD \citep{rajpurkar2018know}), biology and medicine (BioASQ \citep{krithara2023bioasq}), general domain questions from Google search (Natural Questions, NQ \citep{kwiatkowski2019natural}), and natural language math problems (SVAMP \citep{patel-etal-2021-nlp}).
We generally discard the context associated with each input for all datasets except SVAMP, as the tasks become too easy for the current generation of models when context is provided.
We use the following LLMs: Llama-2 7B, 13B, and 70B \citep{touvron2023llama}, Falcon 7B and 40B \citep{almazrouei2023falcon}, and Mistral 7B \citep{jiang2023mistral}, using both standard and instruction-tuned versions of these models.
As the NLI model for defining semantic graphs or semantic clusters, we use DeBERTa-Large-MNLI \citep{he2020deberta}.

\textbf{Baselines.} As baseline methods, we compare KLE with semantic entropy \citep{kuhn2023semantic}, discrete semantic entropy \citep{nature_paper, kuhn2023semantic}, token predictive entropy \citep{malinin2020uncertainty}, embedding regression \citep{nature_paper}, and P(True) \citep{kadavath2022language}. For embedding regression, we train a logistic regression model on the last layer hidden states to predict whether a given LLM answer is correct.

\textbf{KLE Kernels.} We propose to use the following two semantic kernels with $\operatorname{KLE}$: $K_{\textsc{heat}}$ and $K_{\textsc{full}}$. Both are obtained from the weighted graph 
$W_{ij} = w\operatorname{NLI}^{\prime}(S_i, S_j) +  w\operatorname{NLI}^{\prime}(S_j, S_i)$, where $\smash{w=(1,0.5,0)^{\transposed}}$ is a weight vector. Here, we assume that $\operatorname{NLI}^{\prime}$ returns a one-hot prediction over (entailment, neutral class, contradiction). $K_{\textsc{HEAT}}$ is a heat kernel over this graph. 
We further propose $K_{\textsc{FULL}} = \alpha K_{\textsc{HEAT}} + (1 - \alpha) K_\textsc{SE}$, where $\alpha \in \left[0, 1\right]$ and $K_{\textsc{SE}}$ is a semantic entropy kernel. 
We ablate these kernel choices in our experiments below.

\textbf{Evaluation metrics.} 
Following previous work, we evaluate uncertainty methods by measuring their ability to predict the correctness of model responses, calculating the Area under the Receiver Operating Curve (AUROC). 
Further, uncertainty metrics can be used to refuse answering when uncertainty is high, increasing model accuracy on the subset of questions with uncertainty below a threshold.
We measure this with the Area Under the Accuracy-Rejection Curve (AUARC), \citep{nadeem2009accuracy}.
The rejection accuracy at a given uncertainty threshold is the accuracy of the model on the subset of inputs for which uncertainty is lower than the threshold; the AUARC score computes the area under the rejection accuracy curve for all possible thresholds.

\textbf{Sampling.} 
We sample 10 answers per input via top-K sampling with $K=50$ and nucleus sampling with $p=0.9$ at temperature $T=1$.
To assess model accuracy, we draw an additional low-temperature sample ($T=0.1$) and ask an additional LLM (Llama 3 8B Instruct) to compare the model response to the ground truth answer provided by the datasets.

\textbf{Statistical significance.} We assess statistical significance in two ways. First, we run a large number of experimental scenarios (60 model-dataset pairs), and second, for each experimental scenario, we also obtain confidence intervals over $1000$ bootstrap resamples. 
We note that standard errors in each scenario are more representative of the LLM and the dataset rather than the method. Therefore, our main criterion for comparing the methods is based on the fraction of experimental cases where our method outperforms baselines (assessed with a binomial statistical significance test).

\begin{figure}[t]
\centering
\begin{subfigure}{.5\textwidth}
  \centering
  \includegraphics[width=\linewidth]{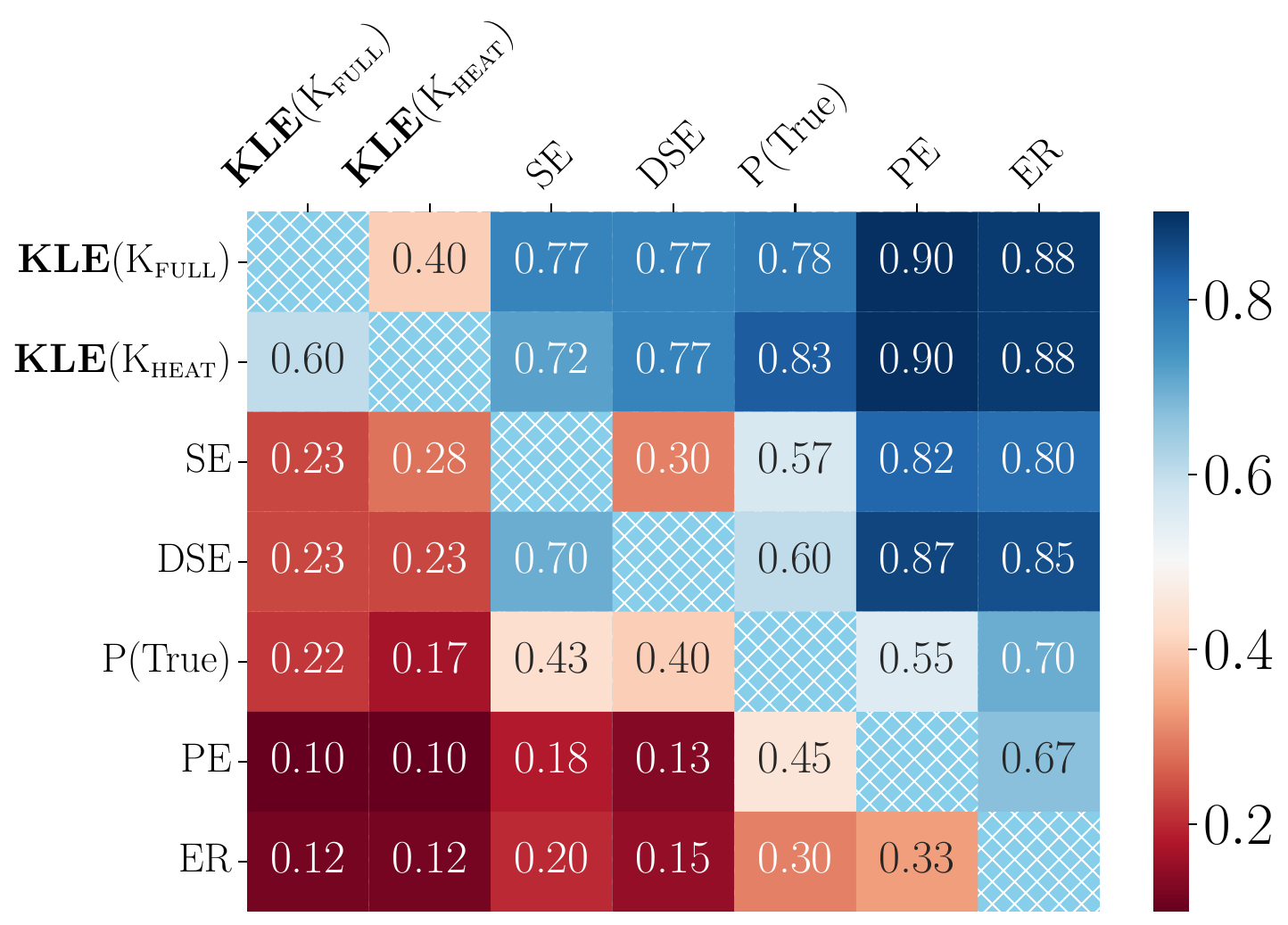}
  \caption{Win rate measured with AUROC}
  \label{fig:auroc_heatmap_best_all}
\end{subfigure}%
\begin{subfigure}{.5\textwidth}
  \centering
  \includegraphics[width=\linewidth]{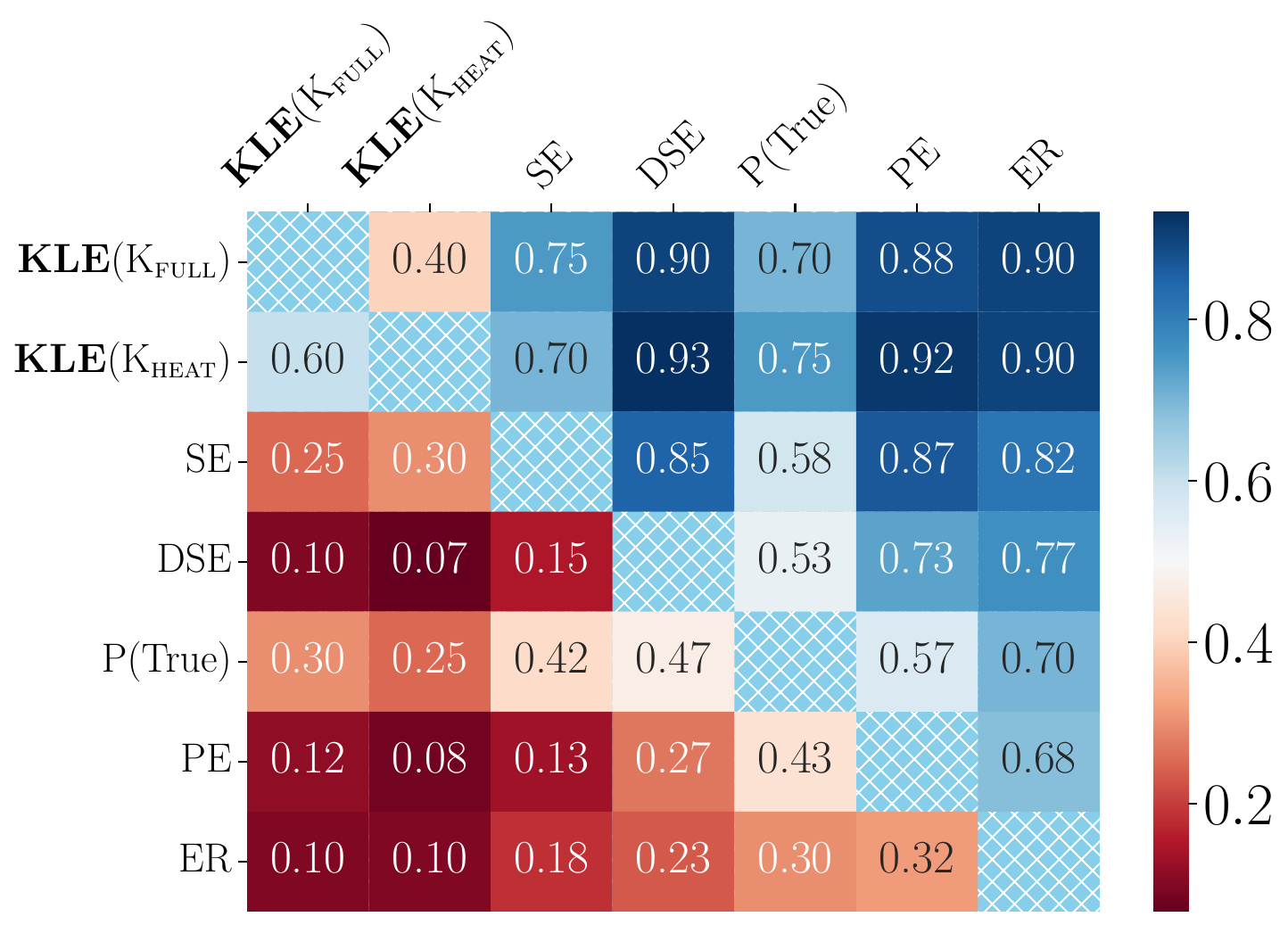}
  \caption{Win rate measured with AUARC}
  \label{fig:auarc_heatmap_best_all}
\end{subfigure}
\caption{Summary of \textbf{60} experimental scenarios. Each cell contains the fraction of experiments where a method from a row outperforms a method from a column. Our methods are labeled $\KLE(\cdot)$. Values larger than or equal to $0.62$ correspond to the significance level $p<0.05$ according to the binomial statistical significance test.}
\label{fig:heatmaps_best_all}
\end{figure}

\textbf{\HO{KLE outperforms previous methods.}}
We compare the performance of UQ methods over 60 scenarios (12 models, five datasets). 
\Cref{fig:heatmaps_best_all} shows the heatmaps of pairwise win rates.
We observe that both our methods, $\KLE(K_{\textsc{heat}})$ and $\KLE(K_{\textsc{full}})$, are superior to the baselines. Furthermore, \cref{table:exp_results_llama_falcon} shows the detailed results for the two largest models from our experiments, Llama 2 70B Chat and Falcon 40B Instruct. The results show that for largest models our method consistently achieves best results compared to baselines. In \cref{fig:app:all_results_non_instr} and \cref{fig:app:all_results_instr}, we show the experimental results for all considered models. Importantly, our best method, $\operatorname{KLE}(K_{\textsc{heat}})$, does not require token-level probabilities from a model, and works in black-box scenarios.

\textbf{\HO{KLE hyperparameters can be selected without validation sets.}}
We compare the strategies of hyperparameter selection from  \cref{section:semantic_graph_kernels}: entropy convergence plots and validation sets (100 samples per dataset except for SVAMP, where we used default hyperparameters). We observe that default hyperparameters achieve similar results as selecting hyperparameters from validation sets and conclude that choosing default hyperparameters from entropy convergence plots is a good way to select hyperparameters in practice. In \cref{fig:kernel_comparison}, we compare the two strategies for selecting hyperparameters, and see that the ranking of the methods remains stable and the pairwise win-rates are similar for both methods.

\begin{wrapfigure}{r}{0.45\textwidth}
    \vspace{-.7cm}
     \centering
     \includegraphics[width=.45\textwidth]{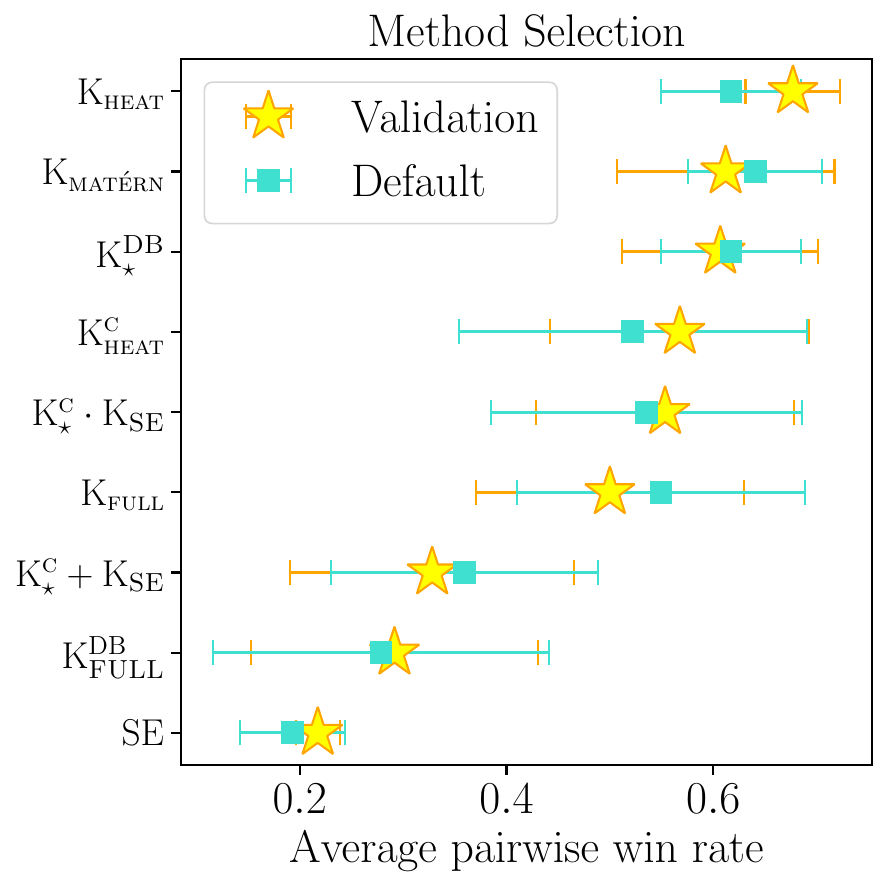}
     \caption{Comparison of various design choices for semantic graph kernels. \textcolor{yellow}{\faStar} represents the best hyperparameters and \textcolor{turquoise}{\faSquare} -- defaults. Error bars are twice the standard error. Summary of 48 experiments.}
     \label{fig:kernel_comparison}
     \vspace{-.5cm}
\end{wrapfigure}

\textbf{\HO{Many design choices outperform existing methods, the best is $\KLE(K_{\textsc{heat}})$.}}
Next, in \cref{fig:kernel_comparison}, we compare several design choices for KLE: choosing a kernel (heat or Mat\'ern), using KLE-c, combining kernels via a weighted sum or product, and using the probabilities returned by DeBERTa for edge weights. The superscript indicates the type of a graph: no superscript indicates a weighted graph as described above, \textsc{DB} means weights are assigned using probabilities from DeBERTa, and \textsc{C} means a weighted graph over clusters ($\operatorname{KLE-c}$). The subscript indicates the semantic kernels: \textsc{SE} stands for a diagonal kernel with semantic probabilities, \textsc{heat} and \textsc{mat\'ern} for the type of kernel, and $\star$ for the best of Heat and Mat\'ern kernels. We observe that even though all design choices outperform SE, the heat kernel over a weighted semantic graph, $\KLE(K_{\textsc{heat}})$, was overall the best. Additionally, we notice that the methods based on token likelihoods are performing better for non-instruction tuned models, and we can practically recommend including semantic probabilities (e.g., use variations of $K_{\textsc{full}}$) if KLE is used in non-instruction tuned scenarios (see \cref{app:fig:instr_vs_non_instr}).

\textbf{\HO{KLE is better in practice because it captures more fine-grained semantic relations than SE.}}
The performance of KLE improves over SE because in complex free-form language generation scenarios, such as those studied here, LLMs can generate similar but not strictly equal answers. SE assigns these to separate clusters, predicting high entropy. By contrast, our method can account for semantic similarities using the kernel metric in the space of meanings over generated texts, and predict reduced uncertainty if necessary. 
We give a detailed illustrative example for which KLE provides better uncertainty estimates than SE from the NQ Open dataset in \cref{fig:app:example}. 

\section{Discussion}
\label{section:discussion}
Measuring semantic uncertainty in LLMs is a challenging and important problem. It requires navigating the semantic space of the answers, and we have suggested a method, KLE, that encodes a similarity measure in this space via semantic kernels. KLE allows for fine-grained estimation of uncertainty and is an expressive generalization of semantic entropy. We provided several specific design choices by defining NLI-based semantic graphs and kernels, and studying kernel hyperparameters. We have evaluated KLE across various domains of natural language generation, and it has demonstrated superior performance compared to the previous methods. Our method works both for white- and black-box settings, enabling its application to a wide variety of practical scenarios. 
We hope to inspire more work that moves from semantic \emph{equivalence} to semantic \emph{similarity} for estimating semantic uncertainty in LLMs.

\textbf{Broader Impact.} Our work advances the progress toward safer and more reliable uses of LLMs. KLE can positively impact areas that involve using LLMs by providing more accurate uncertainty estimates, which can filter out a proportion of erroneous outputs. 

\textbf{Limitations.} One limitation of the proposed method is that it requires multiple samples from an LLM, which generally increases the generation cost. However, in safety-critical tasks, the potential cost of hallucination should outweigh the cost of sampling multiple answers, so reliable uncertainty quantification via KLE should always be worthwhile. Additionally, we study semantic kernels derived from NLI-based semantic graphs, but other semantic kernels warrant investigation, such as kernels on embeddings. Lastly, the NLG landscape is highly diverse, and the method should be carefully evaluated for other potential applications of LLM, such as code generation.

\phantomsection%
\begin{ack}
This work was supported by the Research Council of Finland (Flagship programme: Finnish Center for Artificial Intelligence FCAI, and grants 352986, 358246) and EU (H2020 grant 101016775 and NextGenerationEU).
\end{ack}
\addcontentsline{toc}{section}{References}
\begingroup
\small
\bibliographystyle{abbrvnat}
\bibliography{main}

\begin{thebibliography}{76}
\providecommand{\natexlab}[1]{#1}
\providecommand{\url}[1]{\texttt{#1}}
\expandafter\ifx\csname urlstyle\endcsname\relax
  \providecommand{\doi}[1]{doi: #1}\else
  \providecommand{\doi}{doi: \begingroup \urlstyle{rm}\Url}\fi

\bibitem[Aaronson(2022)]{aaronson2022introduction}
S.~Aaronson.
\newblock Introduction to quantum information science {II} lecture notes, 2022.

\bibitem[Aichberger et~al.(2024)Aichberger, Schweighofer, Ielanskyi, and Hochreiter]{aichberger2024many}
L.~Aichberger, K.~Schweighofer, M.~Ielanskyi, and S.~Hochreiter.
\newblock How many opinions does your llm have? improving uncertainty estimation in nlg.
\newblock In \emph{ICLR 2024 Workshop on Secure and Trustworthy Large Language Models}, 2024.

\bibitem[Almazrouei et~al.(2023)Almazrouei, Alobeidli, Alshamsi, Cappelli, Cojocaru, Debbah, Goffinet, Hesslow, Launay, Malartic, et~al.]{almazrouei2023falcon}
E.~Almazrouei, H.~Alobeidli, A.~Alshamsi, A.~Cappelli, R.~Cojocaru, M.~Debbah, {\'E}.~Goffinet, D.~Hesslow, J.~Launay, Q.~Malartic, et~al.
\newblock The falcon series of open language models.
\newblock \emph{arXiv preprint arXiv:2311.16867}, 2023.

\bibitem[Bach(2022)]{bach2022information}
F.~Bach.
\newblock Information theory with kernel methods.
\newblock \emph{IEEE Transactions on Information Theory}, 69\penalty0 (2):\penalty0 752--775, 2022.

\bibitem[Bengtsson and {\.Z}yczkowski(2017)]{bengtsson2017geometry}
I.~Bengtsson and K.~{\.Z}yczkowski.
\newblock \emph{Geometry of quantum states: an introduction to quantum entanglement}.
\newblock Cambridge university press, 2017.

\bibitem[Borovitskiy et~al.(2021)Borovitskiy, Azangulov, Terenin, Mostowsky, Deisenroth, and Durrande]{borovitskiy2021matern}
V.~Borovitskiy, I.~Azangulov, A.~Terenin, P.~Mostowsky, M.~Deisenroth, and N.~Durrande.
\newblock Mat{\'e}rn {G}aussian processes on graphs.
\newblock In \emph{International Conference on Artificial Intelligence and Statistics}, pages 2593--2601. PMLR, 2021.

\bibitem[Budanitsky and Hirst(2006)]{budanitsky2006evaluating}
A.~Budanitsky and G.~Hirst.
\newblock Evaluating wordnet-based measures of lexical semantic relatedness.
\newblock \emph{Computational linguistics}, 32\penalty0 (1):\penalty0 13--47, 2006.

\bibitem[Burns et~al.(2022)Burns, Ye, Klein, and Steinhardt]{burns2024discovering}
C.~Burns, H.~Ye, D.~Klein, and J.~Steinhardt.
\newblock Discovering latent knowledge in language models without supervision, 2022.

\bibitem[Chen and Mueller(2023)]{chen2023quantifying}
J.~Chen and J.~Mueller.
\newblock Quantifying uncertainty in answers from any language model via intrinsic and extrinsic confidence assessment.
\newblock \emph{arXiv preprint arXiv:2308.16175}, 2023.

\bibitem[Chung(1997)]{chung1997spectral}
F.~R. Chung.
\newblock \emph{Spectral graph theory}, volume~92.
\newblock American Mathematical Soc., 1997.

\bibitem[Clusmann et~al.(2023)Clusmann, Kolbinger, Muti, Carrero, Eckardt, Laleh, L{\"o}ffler, Schwarzkopf, Unger, Veldhuizen, et~al.]{clusmann2023future}
J.~Clusmann, F.~R. Kolbinger, H.~S. Muti, Z.~I. Carrero, J.-N. Eckardt, N.~G. Laleh, C.~M.~L. L{\"o}ffler, S.-C. Schwarzkopf, M.~Unger, G.~P. Veldhuizen, et~al.
\newblock The future landscape of large language models in medicine.
\newblock \emph{Communications medicine}, 3\penalty0 (1):\penalty0 141, 2023.

\bibitem[Cohen et~al.(2023)Cohen, Hamri, Geva, and Globerson]{cohen2023lm}
R.~Cohen, M.~Hamri, M.~Geva, and A.~Globerson.
\newblock {LM} vs {LM}: Detecting factual errors via cross examination.
\newblock \emph{arXiv preprint arXiv:2305.13281}, 2023.

\bibitem[Cole et~al.(2023)Cole, Zhang, Gillick, Eisenschlos, Dhingra, and Eisenstein]{cole2023selectively}
J.~R. Cole, M.~J. Zhang, D.~Gillick, J.~M. Eisenschlos, B.~Dhingra, and J.~Eisenstein.
\newblock Selectively answering ambiguous questions.
\newblock \emph{Conference on Empirical Methods in Natural Language Processing}, 2023.

\bibitem[Crystal(2018)]{crystal2018cambridge}
D.~Crystal.
\newblock \emph{The Cambridge encyclopedia of the English language}.
\newblock Cambridge university press, 2018.

\bibitem[Desai and Durrett(2020)]{desai2020calibration}
S.~Desai and G.~Durrett.
\newblock Calibration of pre-trained transformers.
\newblock \emph{arXiv preprint arXiv:2003.07892}, 2020.

\bibitem[Farquhar et~al.(2024)Farquhar, Kossen, Kuhn, and Gal]{nature_paper}
S.~Farquhar, J.~Kossen, L.~Kuhn, and Y.~Gal.
\newblock Personal Communication, 2024.

\bibitem[Feldman et~al.(2023)Feldman, Foulds, and Pan]{feldman2023trapping}
P.~Feldman, J.~R. Foulds, and S.~Pan.
\newblock Trapping {LLM} hallucinations using tagged context prompts.
\newblock \emph{arXiv preprint arXiv:2306.06085}, 2023.

\bibitem[Filippova(2020)]{filippova2020controlled}
K.~Filippova.
\newblock Controlled hallucinations: Learning to generate faithfully from noisy data.
\newblock \emph{arXiv preprint arXiv:2010.05873}, 2020.

\bibitem[Friedman and Dieng(2023)]{friedman2023vendi}
D.~Friedman and A.~B. Dieng.
\newblock The {V}endi score: A diversity evaluation metric for machine learning.
\newblock \emph{Transactions on Machine Learning Research}, 2023.

\bibitem[Gal and Ghahramani(2016)]{gal2016dropout}
Y.~Gal and Z.~Ghahramani.
\newblock Dropout as a {B}ayesian approximation: Representing model uncertainty in deep learning.
\newblock In \emph{International Conference on Machine Learning}, pages 1050--1059. PMLR, 2016.

\bibitem[Ganguli et~al.(2023)Ganguli, Askell, Schiefer, Liao, Luko{\v{s}}i{\=u}t{\.e}, Chen, Goldie, Mirhoseini, Olsson, Hernandez, et~al.]{ganguli2023capacity}
D.~Ganguli, A.~Askell, N.~Schiefer, T.~I. Liao, K.~Luko{\v{s}}i{\=u}t{\.e}, A.~Chen, A.~Goldie, A.~Mirhoseini, C.~Olsson, D.~Hernandez, et~al.
\newblock The capacity for moral self-correction in large language models.
\newblock \emph{arXiv preprint arXiv:2302.07459}, 2023.

\bibitem[He et~al.(2020)He, Liu, Gao, and Chen]{he2020deberta}
P.~He, X.~Liu, J.~Gao, and W.~Chen.
\newblock Deberta: Decoding-enhanced bert with disentangled attention.
\newblock \emph{arXiv preprint arXiv:2006.03654}, 2020.

\bibitem[Hendrycks et~al.(2021)Hendrycks, Carlini, Schulman, and Steinhardt]{hendrycks2021unsolved}
D.~Hendrycks, N.~Carlini, J.~Schulman, and J.~Steinhardt.
\newblock Unsolved problems in {ML} safety.
\newblock \emph{arXiv preprint arXiv:2109.13916}, 2021.

\bibitem[Hernandez et~al.(2023)Hernandez, Li, and Andreas]{hernandez2023measuring}
E.~Hernandez, B.~Z. Li, and J.~Andreas.
\newblock Measuring and manipulating knowledge representations in language models.
\newblock \emph{arXiv preprint arXiv:2304.00740}, 2023.

\bibitem[Horn and Johnson(2012)]{horn2012matrix}
R.~A. Horn and C.~R. Johnson.
\newblock \emph{Matrix analysis}.
\newblock Cambridge university press, 2012.

\bibitem[Ji et~al.(2023)Ji, Lee, Frieske, Yu, Su, Xu, Ishii, Bang, Madotto, and Fung]{ji2023survey}
Z.~Ji, N.~Lee, R.~Frieske, T.~Yu, D.~Su, Y.~Xu, E.~Ishii, Y.~J. Bang, A.~Madotto, and P.~Fung.
\newblock Survey of hallucination in natural language generation.
\newblock \emph{ACM Computing Surveys}, 55\penalty0 (12):\penalty0 1--38, 2023.

\bibitem[Jiang et~al.(2023)Jiang, Sablayrolles, Mensch, Bamford, Chaplot, Casas, Bressand, Lengyel, Lample, Saulnier, et~al.]{jiang2023mistral}
A.~Q. Jiang, A.~Sablayrolles, A.~Mensch, C.~Bamford, D.~S. Chaplot, D.~d.~l. Casas, F.~Bressand, G.~Lengyel, G.~Lample, L.~Saulnier, et~al.
\newblock Mistral 7b.
\newblock \emph{arXiv preprint arXiv:2310.06825}, 2023.

\bibitem[Jiang et~al.(2021)Jiang, Araki, Ding, and Neubig]{jiang2021can}
Z.~Jiang, J.~Araki, H.~Ding, and G.~Neubig.
\newblock How can we know when language models know? on the calibration of language models for question answering.
\newblock \emph{Transactions of the Association for Computational Linguistics}, 9:\penalty0 962--977, 2021.

\bibitem[Joshi et~al.(2017)Joshi, Choi, Weld, and Zettlemoyer]{joshi2017triviaqa}
M.~Joshi, E.~Choi, D.~S. Weld, and L.~Zettlemoyer.
\newblock Trivia{QA}: A large scale distantly supervised challenge dataset for reading comprehension.
\newblock \emph{arXiv preprint arXiv:1705.03551}, 2017.

\bibitem[Kadavath et~al.(2022)Kadavath, Conerly, Askell, Henighan, Drain, Perez, Schiefer, Hatfield-Dodds, DasSarma, Tran-Johnson, et~al.]{kadavath2022language}
S.~Kadavath, T.~Conerly, A.~Askell, T.~Henighan, D.~Drain, E.~Perez, N.~Schiefer, Z.~Hatfield-Dodds, N.~DasSarma, E.~Tran-Johnson, et~al.
\newblock Language models (mostly) know what they know.
\newblock \emph{arXiv preprint arXiv:2207.05221}, 2022.

\bibitem[Kang et~al.(2024)Kang, Wallace, Tomlin, Kumar, and Levine]{kang2024unfamiliar}
K.~Kang, E.~Wallace, C.~Tomlin, A.~Kumar, and S.~Levine.
\newblock Unfamiliar finetuning examples control how language models hallucinate.
\newblock \emph{arXiv preprint arXiv:2403.05612}, 2024.

\bibitem[Kasneci et~al.(2023)Kasneci, Se{\ss}ler, K{\"u}chemann, Bannert, Dementieva, Fischer, Gasser, Groh, G{\"u}nnemann, H{\"u}llermeier, et~al.]{kasneci2023chatgpt}
E.~Kasneci, K.~Se{\ss}ler, S.~K{\"u}chemann, M.~Bannert, D.~Dementieva, F.~Fischer, U.~Gasser, G.~Groh, S.~G{\"u}nnemann, E.~H{\"u}llermeier, et~al.
\newblock Chatgpt for good? on opportunities and challenges of large language models for education.
\newblock \emph{Learning and individual differences}, 103:\penalty0 102274, 2023.

\bibitem[Kim et~al.(2023)Kim, Kang, Hwang, Shin, and Rhee]{kim2023vne}
J.~Kim, S.~Kang, D.~Hwang, J.~Shin, and W.~Rhee.
\newblock Vne: An effective method for improving deep representation by manipulating eigenvalue distribution.
\newblock In \emph{Proceedings of the IEEE/CVF Conference on Computer Vision and Pattern Recognition}, pages 3799--3810, 2023.

\bibitem[Kondor and Lafferty(2002)]{kondor2002diffusion}
R.~I. Kondor and J.~Lafferty.
\newblock Diffusion kernels on graphs and other discrete structures.
\newblock In \emph{Proceedings of the 19th international conference on machine learning}, volume 2002, pages 315--322, 2002.

\bibitem[Krithara et~al.(2023)Krithara, Nentidis, Bougiatiotis, and Paliouras]{krithara2023bioasq}
A.~Krithara, A.~Nentidis, K.~Bougiatiotis, and G.~Paliouras.
\newblock {BioASQ-QA}: A manually curated corpus for biomedical question answering.
\newblock \emph{Scientific Data}, 10\penalty0 (1):\penalty0 170, 2023.

\bibitem[Kuhn et~al.(2023)Kuhn, Gal, and Farquhar]{kuhn2023semantic}
L.~Kuhn, Y.~Gal, and S.~Farquhar.
\newblock Semantic uncertainty: Linguistic invariances for uncertainty estimation in natural language generation.
\newblock \emph{arXiv preprint arXiv:2302.09664}, 2023.

\bibitem[Kumar and Sarawagi(2019)]{kumar2019calibration}
A.~Kumar and S.~Sarawagi.
\newblock Calibration of encoder decoder models for neural machine translation.
\newblock \emph{arXiv preprint arXiv:1903.00802}, 2019.

\bibitem[Kwiatkowski et~al.(2019)Kwiatkowski, Palomaki, Redfield, Collins, Parikh, Alberti, Epstein, Polosukhin, Devlin, Lee, et~al.]{kwiatkowski2019natural}
T.~Kwiatkowski, J.~Palomaki, O.~Redfield, M.~Collins, A.~Parikh, C.~Alberti, D.~Epstein, I.~Polosukhin, J.~Devlin, K.~Lee, et~al.
\newblock Natural questions: a benchmark for question answering research.
\newblock \emph{Transactions of the Association for Computational Linguistics}, 7:\penalty0 453--466, 2019.

\bibitem[Lakshminarayanan et~al.(2017)Lakshminarayanan, Pritzel, and Blundell]{lakshminarayanan2017simple}
B.~Lakshminarayanan, A.~Pritzel, and C.~Blundell.
\newblock Simple and scalable predictive uncertainty estimation using deep ensembles.
\newblock \emph{Advances in neural information processing systems}, 30, 2017.

\bibitem[Le et~al.(2022)Le, Wang, Gotmare, Savarese, and Hoi]{le2022coderl}
H.~Le, Y.~Wang, A.~D. Gotmare, S.~Savarese, and S.~C.~H. Hoi.
\newblock Coderl: Mastering code generation through pretrained models and deep reinforcement learning.
\newblock \emph{Advances in Neural Information Processing Systems}, 35:\penalty0 21314--21328, 2022.

\bibitem[Li et~al.(2024)Li, Patel, Vi{\'e}gas, Pfister, and Wattenberg]{li2024inference}
K.~Li, O.~Patel, F.~Vi{\'e}gas, H.~Pfister, and M.~Wattenberg.
\newblock Inference-time intervention: Eliciting truthful answers from a language model.
\newblock \emph{Advances in Neural Information Processing Systems}, 36, 2024.

\bibitem[Li et~al.(2023)Li, Zhao, Chia, Ding, Joty, Poria, and Bing]{li2023chain}
X.~Li, R.~Zhao, Y.~K. Chia, B.~Ding, S.~Joty, S.~Poria, and L.~Bing.
\newblock Chain-of-knowledge: Grounding large language models via dynamic knowledge adapting over heterogeneous sources.
\newblock In \emph{The Twelfth International Conference on Learning Representations}, 2023.

\bibitem[Lin et~al.(2022)Lin, Hilton, and Evans]{lin2022teaching}
S.~Lin, J.~Hilton, and O.~Evans.
\newblock Teaching models to express their uncertainty in words.
\newblock \emph{arXiv preprint arXiv:2205.14334}, 2022.

\bibitem[Lin et~al.(2023)Lin, Trivedi, and Sun]{lin2023generating}
Z.~Lin, S.~Trivedi, and J.~Sun.
\newblock Generating with confidence: Uncertainty quantification for black-box large language models.
\newblock \emph{arXiv preprint arXiv:2305.19187}, 2023.

\bibitem[Liu et~al.(2020)Liu, Lin, Padhy, Tran, Bedrax~Weiss, and Lakshminarayanan]{liu2020simple}
J.~Liu, Z.~Lin, S.~Padhy, D.~Tran, T.~Bedrax~Weiss, and B.~Lakshminarayanan.
\newblock Simple and principled uncertainty estimation with deterministic deep learning via distance awareness.
\newblock \emph{Advances in neural information processing systems}, 33:\penalty0 7498--7512, 2020.

\bibitem[Liu et~al.(2023)Liu, Xing, and Zou]{liu2023context}
S.~Liu, L.~Xing, and J.~Zou.
\newblock In-context vectors: Making in context learning more effective and controllable through latent space steering.
\newblock \emph{arXiv preprint arXiv:2311.06668}, 2023.

\bibitem[MacDiarmid et~al.(2024)MacDiarmid, Maxwell, Schiefer, Mu, Kaplan, Duvenaud, Bowman, Tamkin, Perez, Sharma, Denison, and Hubinger]{macdiarmid2024sleeperagentprobes}
M.~MacDiarmid, T.~Maxwell, N.~Schiefer, J.~Mu, J.~Kaplan, D.~Duvenaud, S.~Bowman, A.~Tamkin, E.~Perez, M.~Sharma, C.~Denison, and E.~Hubinger.
\newblock Simple probes can catch sleeper agents, 2024.
\newblock URL \url{https://www.anthropic.com/news/probes-catch-sleeper-agents}.

\bibitem[MacKay(2003)]{mackay2003information}
D.~J. MacKay.
\newblock \emph{Information theory, inference and learning algorithms}.
\newblock Cambridge university press, 2003.

\bibitem[Malinin and Gales(2020)]{malinin2020uncertainty}
A.~Malinin and M.~Gales.
\newblock Uncertainty estimation in autoregressive structured prediction.
\newblock \emph{arXiv preprint arXiv:2002.07650}, 2020.

\bibitem[Manakul et~al.(2023)Manakul, Liusie, and Gales]{manakul2023selfcheckgpt}
P.~Manakul, A.~Liusie, and M.~J. Gales.
\newblock {SelfCheckGPT}: Zero-resource black-box hallucination detection for generative large language models.
\newblock In \emph{Conference on Empirical Methods in Natural Language Processing}, 2023.

\bibitem[Maynez et~al.(2020)Maynez, Narayan, Bohnet, and McDonald]{maynez2020faithfulness}
J.~Maynez, S.~Narayan, B.~Bohnet, and R.~McDonald.
\newblock On faithfulness and factuality in abstractive summarization.
\newblock \emph{arXiv preprint arXiv:2005.00661}, 2020.

\bibitem[Mielke et~al.(2020)Mielke, Szlam, Boureau, and Dinan]{mielke2020linguistic}
S.~J. Mielke, A.~Szlam, Y.-L. Boureau, and E.~Dinan.
\newblock Linguistic calibration through metacognition: aligning dialogue agent responses with expected correctness.
\newblock \emph{arXiv preprint arXiv:2012.14983}, 11, 2020.

\bibitem[Mielke et~al.(2022)Mielke, Szlam, Dinan, and Boureau]{mielke2022reducing}
S.~J. Mielke, A.~Szlam, E.~Dinan, and Y.-L. Boureau.
\newblock Reducing conversational agents’ overconfidence through linguistic calibration.
\newblock \emph{Transactions of the Association for Computational Linguistics}, 10:\penalty0 857--872, 2022.

\bibitem[Mukhoti et~al.(2023)Mukhoti, Kirsch, van Amersfoort, Torr, and Gal]{mukhoti2023deep}
J.~Mukhoti, A.~Kirsch, J.~van Amersfoort, P.~H. Torr, and Y.~Gal.
\newblock Deep deterministic uncertainty: A new simple baseline.
\newblock In \emph{Proceedings of the IEEE/CVF Conference on Computer Vision and Pattern Recognition}, pages 24384--24394, 2023.

\bibitem[Nadeem et~al.(2009)Nadeem, Zucker, and Hanczar]{nadeem2009accuracy}
M.~S.~A. Nadeem, J.-D. Zucker, and B.~Hanczar.
\newblock Accuracy-rejection curves ({ARC}s) for comparing classification methods with a reject option.
\newblock In \emph{Machine Learning in Systems Biology}, pages 65--81. PMLR, 2009.

\bibitem[Nikitin et~al.(2022)Nikitin, John, Solin, and Kaski]{nikitin2022non}
A.~V. Nikitin, S.~John, A.~Solin, and S.~Kaski.
\newblock Non-separable spatio-temporal graph kernels via {SPDE}s.
\newblock In \emph{International Conference on Artificial Intelligence and Statistics}, pages 10640--10660. PMLR, 2022.

\bibitem[OpenAI(2023)]{achiam2023gpt}
OpenAI.
\newblock {GPT}-4 technical report.
\newblock 2023.

\bibitem[Ovadia et~al.(2019)Ovadia, Fertig, Ren, Nado, Sculley, Nowozin, Dillon, Lakshminarayanan, and Snoek]{ovadia2019can}
Y.~Ovadia, E.~Fertig, J.~Ren, Z.~Nado, D.~Sculley, S.~Nowozin, J.~Dillon, B.~Lakshminarayanan, and J.~Snoek.
\newblock Can you trust your model's uncertainty? evaluating predictive uncertainty under dataset shift.
\newblock \emph{Advances in neural information processing systems}, 32, 2019.

\bibitem[Patel et~al.(2021)Patel, Bhattamishra, and Goyal]{patel-etal-2021-nlp}
A.~Patel, S.~Bhattamishra, and N.~Goyal.
\newblock Are {NLP} models really able to solve simple math word problems?
\newblock In \emph{Proceedings of the 2021 Conference of the North American Chapter of the Association for Computational Linguistics: Human Language Technologies}, pages 2080--2094, Online, June 2021. Association for Computational Linguistics.
\newblock \doi{10.18653/v1/2021.naacl-main.168}.
\newblock URL \url{https://aclanthology.org/2021.naacl-main.168}.

\bibitem[Quach et~al.(2023)Quach, Fisch, Schuster, Yala, Sohn, Jaakkola, and Barzilay]{quach2023conformal}
V.~Quach, A.~Fisch, T.~Schuster, A.~Yala, J.~H. Sohn, T.~S. Jaakkola, and R.~Barzilay.
\newblock Conformal language modeling.
\newblock \emph{arXiv preprint arXiv:2306.10193}, 2023.

\bibitem[Rajpurkar et~al.(2018)Rajpurkar, Jia, and Liang]{rajpurkar2018know}
P.~Rajpurkar, R.~Jia, and P.~Liang.
\newblock Know what you don't know: Unanswerable questions for {SQuAD}.
\newblock \emph{arXiv preprint arXiv:1806.03822}, 2018.

\bibitem[Ren et~al.(2023)Ren, Zhao, Vu, Liu, and Lakshminarayanan]{ren2023self}
J.~Ren, Y.~Zhao, T.~Vu, P.~J. Liu, and B.~Lakshminarayanan.
\newblock Self-evaluation improves selective generation in large language models.
\newblock \emph{arXiv preprint arXiv:2312.09300}, 2023.

\bibitem[Roy and Vetterli(2007)]{roy2007effective}
O.~Roy and M.~Vetterli.
\newblock The effective rank: A measure of effective dimensionality.
\newblock In \emph{2007 15th European signal processing conference}, pages 606--610. IEEE, 2007.

\bibitem[Shannon(1951)]{shannon1951prediction}
C.~E. Shannon.
\newblock Prediction and entropy of printed english.
\newblock \emph{Bell system technical journal}, 30\penalty0 (1):\penalty0 50--64, 1951.

\bibitem[Team et~al.(2023)Team, Anil, Borgeaud, Wu, Alayrac, Yu, Soricut, Schalkwyk, Dai, Hauth, et~al.]{team2023gemini}
G.~Team, R.~Anil, S.~Borgeaud, Y.~Wu, J.-B. Alayrac, J.~Yu, R.~Soricut, J.~Schalkwyk, A.~M. Dai, A.~Hauth, et~al.
\newblock Gemini: a family of highly capable multimodal models.
\newblock 2023.

\bibitem[Tian et~al.(2023{\natexlab{a}})Tian, Mitchell, Yao, Manning, and Finn]{tian2023finetuning}
K.~Tian, E.~Mitchell, H.~Yao, C.~D. Manning, and C.~Finn.
\newblock Fine-tuning language models for factuality.
\newblock \emph{arXiv}, 2023{\natexlab{a}}.

\bibitem[Tian et~al.(2023{\natexlab{b}})Tian, Mitchell, Zhou, Sharma, Rafailov, Yao, Finn, and Manning]{tian2023just}
K.~Tian, E.~Mitchell, A.~Zhou, A.~Sharma, R.~Rafailov, H.~Yao, C.~Finn, and C.~D. Manning.
\newblock Just ask for calibration: Strategies for eliciting calibrated confidence scores from language models fine-tuned with human feedback.
\newblock \emph{arXiv preprint arXiv:2305.14975}, 2023{\natexlab{b}}.

\bibitem[Touvron et~al.(2023)Touvron, Martin, Stone, Albert, Almahairi, Babaei, Bashlykov, Batra, Bhargava, Bhosale, et~al.]{touvron2023llama}
H.~Touvron, L.~Martin, K.~Stone, P.~Albert, A.~Almahairi, Y.~Babaei, N.~Bashlykov, S.~Batra, P.~Bhargava, S.~Bhosale, et~al.
\newblock Llama 2: Open foundation and fine-tuned chat models.
\newblock \emph{arXiv preprint arXiv:2307.09288}, 2023.

\bibitem[Varshney et~al.(2023)Varshney, Yao, Zhang, Chen, and Yu]{varshney2023stitch}
N.~Varshney, W.~Yao, H.~Zhang, J.~Chen, and D.~Yu.
\newblock A stitch in time saves nine: Detecting and mitigating hallucinations of llms by validating low-confidence generation.
\newblock \emph{arXiv preprint arXiv:2307.03987}, 2023.

\bibitem[Von~Luxburg(2007)]{von2007tutorial}
U.~Von~Luxburg.
\newblock A tutorial on spectral clustering.
\newblock \emph{Statistics and computing}, 17:\penalty0 395--416, 2007.

\bibitem[Von~Neumann(2018)]{von2018mathematical}
J.~Von~Neumann.
\newblock \emph{Mathematical foundations of quantum mechanics: New edition}, volume~53.
\newblock Princeton university press, 2018.

\bibitem[Xiao and Wang(2019)]{xiao2019quantifying}
Y.~Xiao and W.~Y. Wang.
\newblock Quantifying uncertainties in natural language processing tasks.
\newblock In \emph{Proceedings of the AAAI conference on artificial intelligence}, volume~33, pages 7322--7329, 2019.

\bibitem[Xiao and Wang(2021)]{xiao2021hallucination}
Y.~Xiao and W.~Y. Wang.
\newblock On hallucination and predictive uncertainty in conditional language generation.
\newblock \emph{arXiv preprint arXiv:2103.15025}, 2021.

\bibitem[Yang et~al.(2023)Yang, Robeyns, Wang, and Aitchison]{yang2023bayesian}
A.~X. Yang, M.~Robeyns, X.~Wang, and L.~Aitchison.
\newblock Bayesian low-rank adaptation for large language models.
\newblock \emph{arXiv preprint arXiv:2308.13111}, 2023.

\bibitem[Zhang et~al.(2024)Zhang, Liu, Basaldella, and Collier]{zhang2024luq}
C.~Zhang, F.~Liu, M.~Basaldella, and N.~Collier.
\newblock Luq: Long-text uncertainty quantification for llms.
\newblock \emph{arXiv preprint arXiv:2403.20279}, 2024.

\bibitem[Zou et~al.(2023)Zou, Phan, Chen, Campbell, Guo, Ren, Pan, Yin, Mazeika, Dombrowski, et~al.]{zou2023representation}
A.~Zou, L.~Phan, S.~Chen, J.~Campbell, P.~Guo, R.~Ren, A.~Pan, X.~Yin, M.~Mazeika, A.-K. Dombrowski, et~al.
\newblock Representation engineering: A top-down approach to ai transparency.
\newblock \emph{arXiv preprint arXiv:2310.01405}, 2023.

\end{thebibliography}
\endgroup

\clearpage
\appendix
\appendix
\counterwithin{figure}{section}
\counterwithin{equation}{section}

\nipstitle{
    {\Large Supplementary Material:} \\ Kernel Language Entropy: Fine-grained Uncertainty Quantification for LLMs from Semantic Similarities}
\pagestyle{empty}


\appendix

\section{Background}
\subsection{Linear Algebra}

\begin{definition}
    \label{app:definition:kernel}
    For a set $\mathcal{X} \neq \emptyset$, a symmetric function $K: \mathcal{X} \times \mathcal{X} \rightarrow \mathrm{R}$ is called a positive-semidefinite kernel if for all $n>0, x_i \in \mathcal{X}, \alpha_i \in \mathbb{R}$
    \begin{equation}
        \sum_{i=1}^n\sum_{j=1}^n \alpha_i \alpha_j K(x_i, x_j) \geq 0.
    \end{equation}
    For a finite set $\mathcal{X}$, a positive semidefinite kernel is a positive semidefinite matrix of the size $|\mathcal{X}|$.
\end{definition}

\begin{lemma}
For a block diagonal matrix \[  A = 
\begin{pmatrix}
  A_{11} & 0       & 0       & \ldots & 0 \\
  0      & A_{22}  & 0       & \ldots & 0 \\
  0      & 0       & A_{33}  & \ldots & 0 \\
  \vdots & \vdots  & \vdots  & \ddots & \vdots \\
  0      & 0       & 0       & \ldots & A_{nn} \\
\end{pmatrix}
\]
eigenvalues are all eigenvalues of the blocks $A_{ii}$ combined, or equivalently $\det{(A - \lambda I)} = 0 \Leftrightarrow \prod\limits_{i=1}^n\det(A_{ii} - \lambda I) = 0$
\end{lemma}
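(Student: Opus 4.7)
The plan is to establish the determinant identity $\det(A - \lambda I) = \prod_{i=1}^n \det(A_{ii} - \lambda I)$ and then read off the eigenvalue statement as an immediate consequence of the fact that the eigenvalues of a square matrix are exactly the roots (with multiplicity) of its characteristic polynomial. The crucial first observation is that if $A$ is block diagonal with diagonal blocks $A_{11}, \ldots, A_{nn}$, then $I$ decomposes conformally into identity blocks, so $A - \lambda I$ is again block diagonal with diagonal blocks $A_{ii} - \lambda I_{k_i}$, where $k_i$ is the size of $A_{ii}$. Hence it suffices to prove the general factorization $\det(M) = \prod_{i=1}^n \det(M_{ii})$ for any block diagonal $M$, and then substitute $M = A - \lambda I$.

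To prove the determinant factorization I would use induction on the number of diagonal blocks $n$. The base case $n = 1$ is trivial. For the inductive step, the cleanest route is to reduce to the two-block case: write $M = \mathrm{diag}(M_{11}, M')$, where $M'$ is itself block diagonal with blocks $M_{22}, \ldots, M_{nn}$, and apply the two-block identity
\begin{equation}
    \det\begin{pmatrix} B & 0 \\ 0 & C \end{pmatrix} = \det(B)\det(C)
\end{equation}
for square $B$, $C$. The two-block identity itself admits a short proof, e.g.\ by Laplace expansion along the rows occupied by $B$ (every nonvanishing term in the expansion forces the column indices to stay within $B$'s block, after which the remaining minor is $\det(C)$), or via the Leibniz formula by noting that the only permutations with a nonzero contribution are those preserving each block's index set, so that both the sign and the product factor blockwise. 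Combining the two-block identity with the inductive hypothesis applied to $M'$ yields $\det(M) = \det(M_{11}) \prod_{i=2}^n \det(M_{ii})$, closing the induction.

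With the determinant factorization in hand, substituting back gives $\det(A - \lambda I) = \prod_{i=1}^n \det(A_{ii} - \lambda I)$, which is the stated equivalence $\det(A - \lambda I) = 0 \iff \prod_{i=1}^n \det(A_{ii} - \lambda I) = 0$ viewed as an identity of polynomials in $\lambda$. Since a product of polynomials vanishes at $\lambda_0$ iff at least one factor vanishes at $\lambda_0$, and since eigenvalues are precisely the roots of the characteristic polynomial, the multiset of eigenvalues of $A$ is the disjoint union of the multisets of eigenvalues of the blocks $A_{ii}$.

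The main obstacle is purely bookkeeping: phrasing the Laplace-expansion or Leibniz-formula argument for the two-block identity crisply, and being careful that the characteristic polynomial factorization gives the correct \emph{algebraic} multiplicities (which is automatic from the polynomial identity but worth noting). No substantive analytical difficulty arises, as the statement is a standard consequence of the block structure and the multiplicativity of determinants on block-triangular matrices.
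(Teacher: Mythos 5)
Your proof is correct, and it takes a mildly but genuinely different route from the paper's. The paper factorizes the block diagonal matrix as a product of matrices each of which is the identity except for a single diagonal block, and then invokes multiplicativity of the determinant, $\det(XY)=\det(X)\det(Y)$, to conclude $\det(A-\lambda I)=\prod_i\det(A_{ii}-\lambda I)$. You instead induct on the number of blocks and reduce to the two-block identity $\det\bigl(\begin{smallmatrix}B&0\\0&C\end{smallmatrix}\bigr)=\det(B)\det(C)$, proved directly via Laplace expansion or the Leibniz formula. The two arguments are close cousins: the paper's factorization still tacitly needs to evaluate the determinant of a matrix that is block diagonal with one nontrivial block and identity elsewhere, which is exactly your two-block identity in the special case $C=I$; so neither route fully escapes that computation. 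What your version buys is self-containment (no appeal to multiplicativity of the determinant) and explicitness on a point the paper glosses over, namely that $A-\lambda I$ is itself block diagonal with blocks $A_{ii}-\lambda I$, so the factorization must be applied to $A-\lambda I$ rather than to $A$. What the paper's version buys is brevity, given that $\det(XY)=\det(X)\det(Y)$ is taken as known. You are also right to flag that the polynomial identity delivers the algebraic multiplicities, not merely the set of roots; the paper does not remark on this but uses exactly that fact when combining eigenvalues of the blocks in Theorem A.5.
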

\begin{proof}
Notice, that a block diagonal matrix can be decomposed into the following product:
\begin{align*}
A =\begin{pmatrix}
  A_{11} & 0            & \ldots & 0 \\
  0      & I_{22}         & \ldots & 0 \\
  \vdots & \vdots  & \ddots & \vdots \\
  0      & 0            & \ldots & I_{nn} \\
\end{pmatrix}
\begin{pmatrix}
  I_{11} & 0            & \ldots & 0 \\
  0      & A_{22}         & \ldots & 0 \\
  \vdots & \vdots  & \ddots & \vdots \\
  0      & 0            & \ldots & I_{nn} \\
\end{pmatrix}
\ldots
&\begin{pmatrix}
  I_{11} & 0            & \ldots & 0 \\
  0      & I_{22}         & \ldots & 0 \\
  \vdots & \vdots  & \ddots & \vdots \\
  0      & 0            & \ldots & A_{nn} \\
\end{pmatrix},
\end{align*}
where $I_{ii}$ are the identity matrices of the same size as $A_{ii}$.

By using the product rule for determinants, we obtain $\det{(A - \lambda I)} = 0 \Leftrightarrow \prod\limits_{i=1}^n\det(A_{ii} - \lambda I) = 0$.
\end{proof}

\begin{lemma}[\citet{horn2012matrix}]
An all-ones matrix $J$ of size $n$ has eigenvalues $\{n, \underbrace{0, \ldots, 0}_{n-1}\}$.
\end{lemma}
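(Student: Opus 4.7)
The plan is to exploit the rank-one structure of $J$. First I would observe that $J = \mathbf{1}\mathbf{1}^{\top}$ where $\mathbf{1} = (1,\ldots,1)^{\top} \in \mathbb{R}^n$ is the all-ones column vector. This expresses $J$ as the outer product of a nonzero vector with itself, so $\operatorname{rank}(J) = 1$. By the rank-nullity theorem, $\dim \ker(J) = n - 1$, hence $0$ is an eigenvalue with geometric multiplicity at least $n-1$.

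Next I would exhibit $\mathbf{1}$ as an eigenvector with eigenvalue $n$ via the direct computation $(J\mathbf{1})_i = \sum_{j=1}^n J_{ij} = n$, so $J\mathbf{1} = n\mathbf{1}$. Since $n \neq 0$, the vector $\mathbf{1}$ is not in $\ker(J)$ and therefore contributes an additional eigenvalue equal to $n$, independent of the zero eigenspace.

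Finally I would close the count. Because $J$ is real symmetric, it is diagonalizable and the algebraic multiplicities of its eigenvalues sum to $n$. We have already identified one copy of $n$ and at least $n-1$ copies of $0$, which accounts for all $n$ slots, so the spectrum is exactly $\{n, 0, \ldots, 0\}$ with $n-1$ zeros. As a consistency check, $\Tr(J) = n$ matches $n + 0 + \cdots + 0$.

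The main obstacle is essentially nonexistent — this is a textbook fact. The only subtlety worth flagging is the distinction between geometric and algebraic multiplicity of $0$, which is immediate here because symmetry of $J$ forces the two to coincide, so the $(n-1)$-dimensional kernel genuinely yields $n-1$ zeros in the eigenvalue list.
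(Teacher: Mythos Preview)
Your proof is correct and is the standard textbook argument. The paper itself does not supply a proof of this lemma; it simply cites \citet{horn2012matrix} and moves on, so there is no ``paper's own proof'' to compare against. Your rank-one factorization $J = \mathbf{1}\mathbf{1}^{\top}$, the explicit eigenvector computation $J\mathbf{1} = n\mathbf{1}$, and the dimension count via rank--nullity plus diagonalizability of symmetric matrices together constitute exactly the kind of short proof one would find in Horn and Johnson, so nothing is missing or out of place.
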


\subsection{Discrete Mathematics}
Throughout the text, we often refer to the notion of equivalence relation. We remind readers of the definition of equivalence relation here.
\begin{definition}
    Equivalence relation is a binary relation $E(\cdot, \cdot)$ on a set $\mathcal{X}$, that is for any $x, y, z \in \mathcal{X}$, this relation is
    \begin{enumerate}
        \item reflexive $E(x, x)$,
        \item symmetric $E(x, y) \iff E(y, x)$,
        \item transitive if $E(x, y)$ and $E(y, z)$ then $E(x, z)$.
    \end{enumerate}
    
\end{definition}

\section{Theoretical Results and Proofs}
\label{appendix:additional_theory}

In this section, we prove \cref{proposition:texts_kernel}, for convenience we separate it into two theorems for KLE and KLE-c.

\begin{FrameTheorem}[KLE is a generalization of SE]
    \label{appendix:proposition:texts_kernel}
    For any semantic clustering, there exists a semantic kernel over texts $K_{\text{sem}}(s, s^{\prime})$ such that the VNE of this kernel is equal to semantic entropy (computed as in \cref{eq:mc_se}).
\end{FrameTheorem}

\begin{proof}
    Let us fix an arbitrary semantic clustering over $M$ clusters $\mathcal{C} = \{C_1, \ldots, C_M\}$, with the size of each cluster $m_i$. Now, we will construct a kernel $K$ for an input $x$ such that the von Neumann entropy with this kernel will be equal to the semantic entropy of the texts $\VNE(K) = \SE(x, \mathcal{C})$. Let us consider a block-diagonal kernel $K$. We will denote blocks of $K$ as $K_1, \ldots, K_M$:
    \begin{equation}
        K=
        \begin{pmatrix}
          K_{1} & 0       & 0       & \ldots & 0 \\
          0      & K_{2}  & 0       & \ldots & 0 \\
          0      & 0       & K_{3}  & \ldots & 0 \\
          \vdots & \vdots  & \vdots  & \ddots & \vdots \\
          0      & 0       & 0       & \ldots & K_{M} \\
        \end{pmatrix}
    \end{equation}
    where $M$ corresponds to the number of semantic clusters. The size of each block $K_i$ is $m_i \times m_i$. Note that because $K$ is block-diagonal, it follows that $\VNE(K)=\sum_{i=1}^M \VNE(K_i)$. Consequently, if
    \begin{enumerate}
        \setlength\itemsep{0.01cm}
        \item $\VNE(K_i) = - p(C_i|x) \log p(C_i | x)$,
        \item the sum of eigenvalues of $K_i$ is equal to $p(C_i | x)$,
        \item $K$ is positive semidefinite and unit trace,
    \end{enumerate}
    then $\VNE(K) = \SE(s|x)$. 
    
    Let us define each block as $K_i = \frac{p(C_i | x)}{m_i} J_{m_i}$ where $J_{m_i}$ is an all-ones matrix of size $m_i \times m_i$.
    
    Next, we prove that the desired properties from the list above hold. Indeed, the eigenvalues of $K_i$ are $p(C_i | x)$ with multiplicity one and 0 with multiplicity $m_i - 1$. So, $\VNE(K_i) = - p(C_i|x) \log p(C_i | x)$ (recall that for calculating VN entropy, we assume $0 \log{0} = 0$), and Properties 1 and 2 are fulfilled. $K$ is also symmetric and has non-negative eigenvalues. Thus, Property 3 is fulfilled as well. 
    
    Because $K$ satisfies all properties, we have proven that $\VNE(K(s, x)) = \SE(s|x)$.
\end{proof}

\begin{FrameTheorem}[KLE-c is more general than SE]
        \label{appendix:proposition:sem_clusters_kernel}
    For any semantic clustering, there exists a kernel over semantic clusters $K_s(c, c^{\prime})$ such that the VNE of this kernel is equal to semantic entropy (computed as in \cref{eq:mc_se}).
\end{FrameTheorem}
\begin{proof}
    Analogously to \cref{appendix:proposition:texts_kernel} but with the blocks of size one.
\end{proof}

The theorems not only show that KLE generalizes SE but also provide an explicit form for a semantic kernel that can be used with KLE to recover SE.

\section{Kernel Hyperparameters}
\label{appendix:kernel_hyperparams}
Following the discussion about kernel hyperparameters selection from \cref{section:semantic_graph_kernels}, we visualize entropy convergence plots for Heat kernels in \cref{fig:convergence_plots} and visualize heat and Mat\'ern kernels on 2-d grid in \cref{fig:app:heat_and_matern}. Next, we expand on the question of parameter sensitivity, in \cref{app:fig:comparing_hyperparams}, and whether it is necessary to use a validation set for selecting kernel hyperparameters. We observe that both with reasonable default choices ($t=0.3$, $\alpha=0.5$, $\nu=1$, and $\kappa = 1$) and by selecting hyperparameters on a separate set of answers, we outperform the existing methods. When choosing hyperparameters, we also have included a boolean flag whether the graph Laplacian should be normalized, as, generally speaking, both the normalized and the standard graph Laplacians can be used with heat and Mat\'ern kernels
\begin{equation}
L_{\mathrm{n}} = \left(D^{+}\right)^{1 / 2} L\left(D^{+}\right)^{1 / 2},
\end{equation}
where $D^{+}$ is the Moore-Penrose inverse of the degree matrix $D$. We observe similar results when analyzing other semantic kernels.

\begin{figure}[t]
\centering
  \centering
  \includegraphics[width=\linewidth]{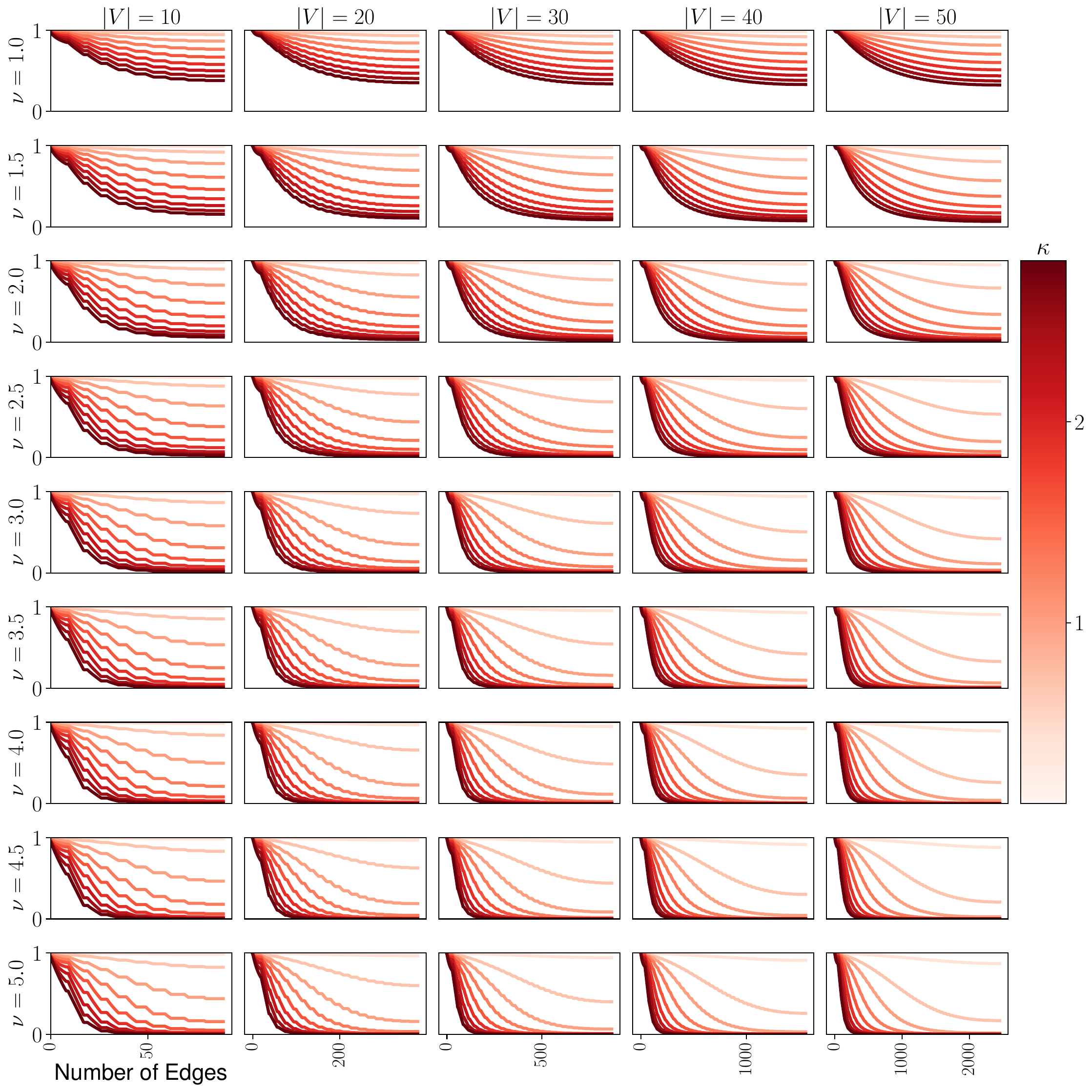}
\caption{Mat\'ern Entropy Convergence Plots.}
\label{fig:app:matern_ecp}
\end{figure}

\begin{figure}[t]
\centering
  \centering
  \includegraphics[width=\linewidth]{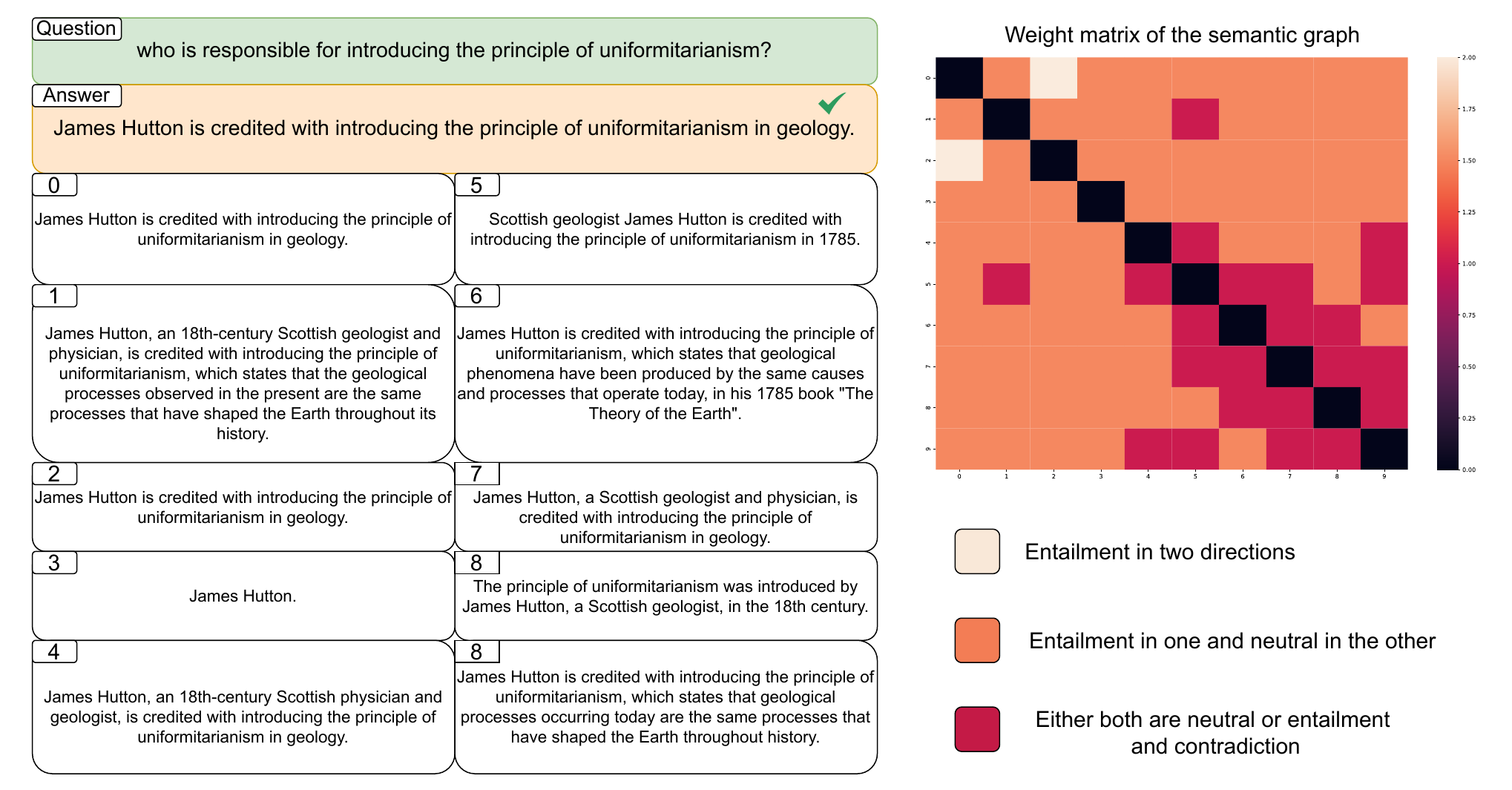}
\caption{Example from NQ.}
\label{fig:app:example}
\end{figure}

\begin{figure}[htbp!]
\centering
\begin{subfigure}{.5\textwidth}
  \centering
  \includegraphics[width=\linewidth]{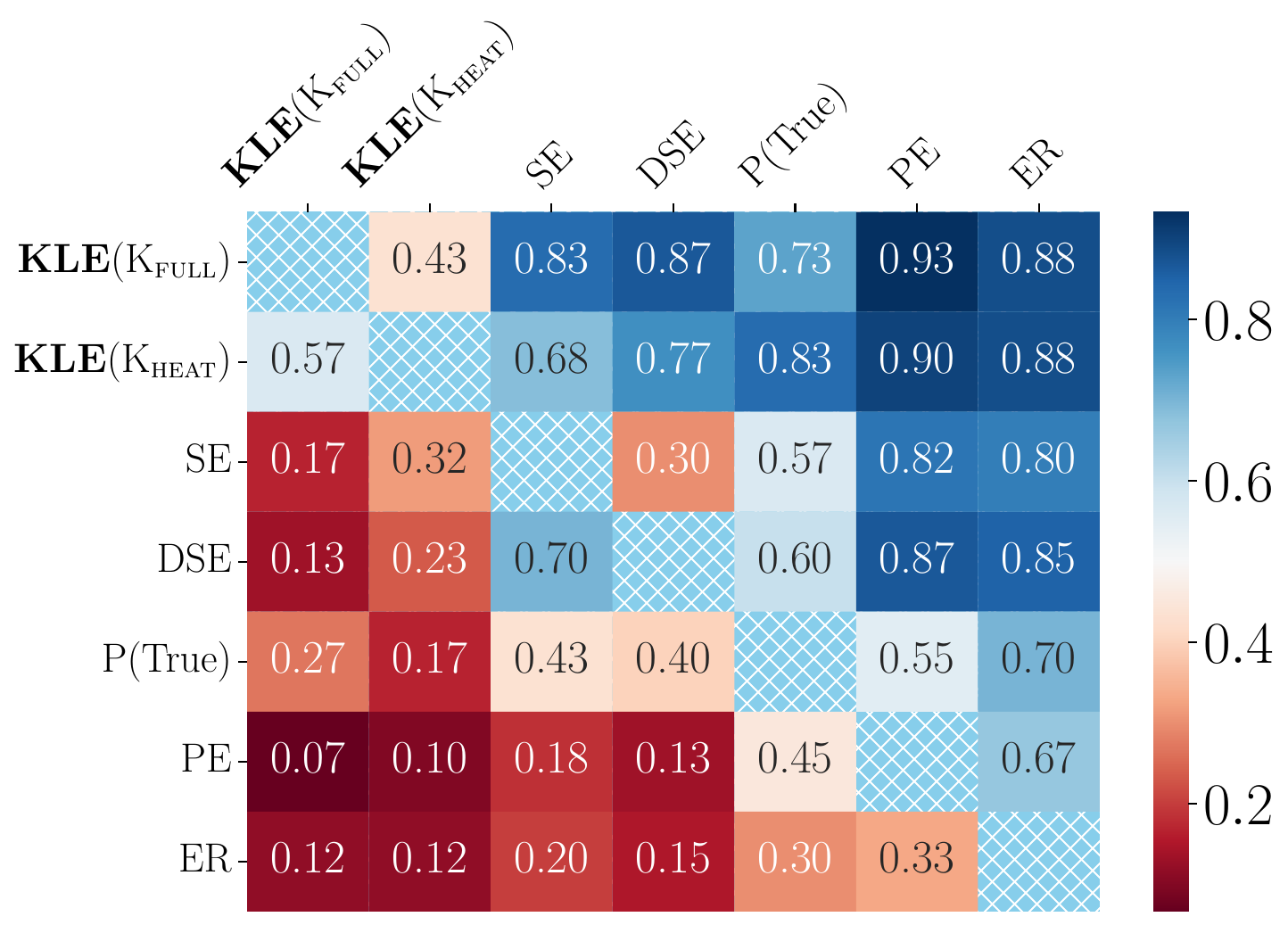}
  \caption{AUROC: Default hyperparameters}
  \label{app:fig:default_AUROC}
\end{subfigure}%
\begin{subfigure}{.5\textwidth}
  \centering
  \includegraphics[width=\linewidth]{images/heatmaps/heatmaps_all_best_AUROC.pdf}
  \caption{AUROC: Validation set hyperparameters}
  \label{app:fig:best_AUROC}
\end{subfigure}
\vspace{1em}
\begin{subfigure}{.5\textwidth}
  \centering
  \includegraphics[width=\linewidth]{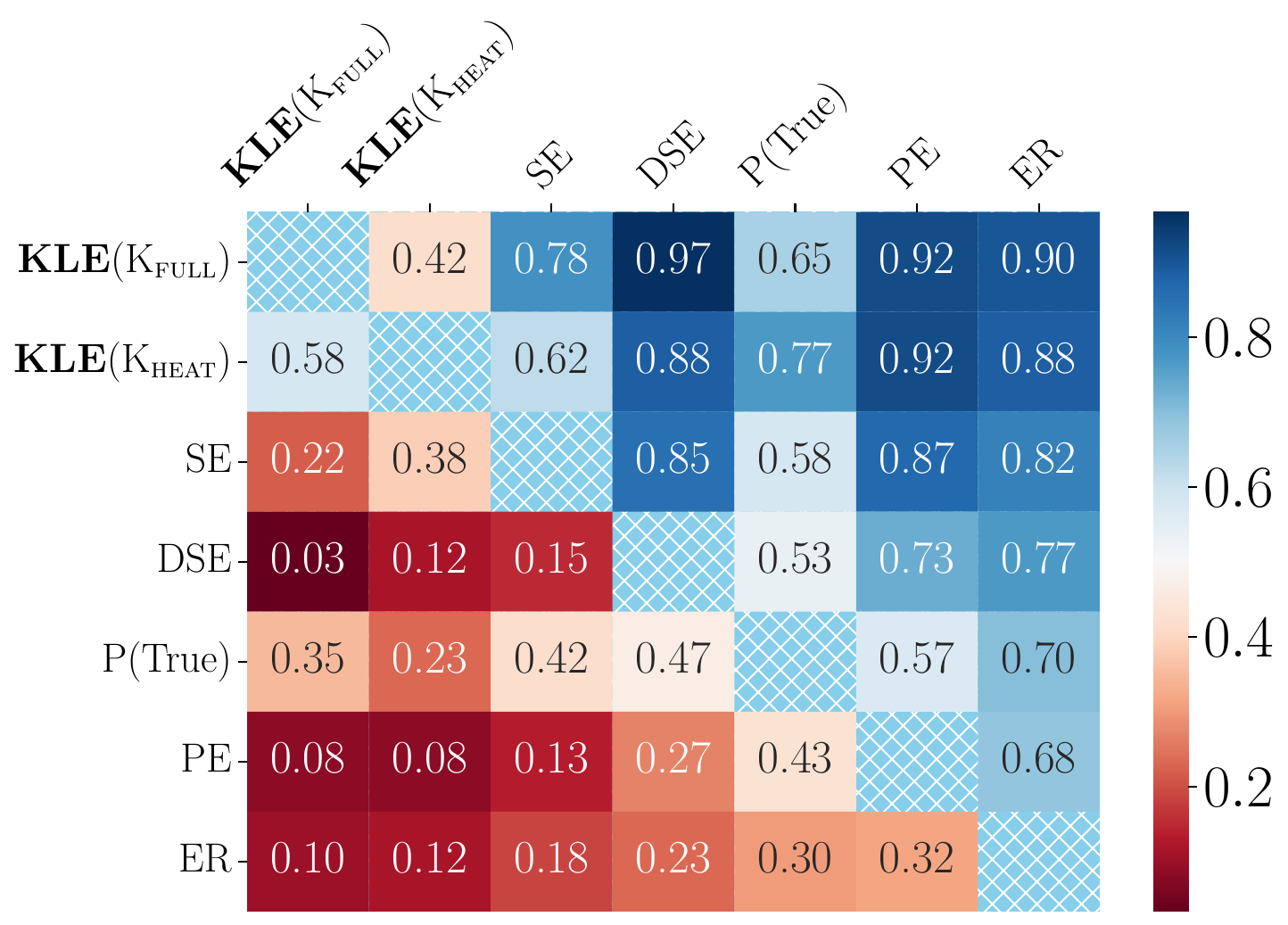}
  \caption{AUARC: Default hyperparameters}
  \label{app:fig:default_AUARC}
\end{subfigure}%
\begin{subfigure}{.5\textwidth}
  \centering
  \includegraphics[width=\linewidth]{images/heatmaps/heatmaps_all_best_area_under_thresholded_accuracy.pdf}
  \caption{AUARC: Validation set hyperparameters}
  \label{app:fig:best_AUARC}
\end{subfigure}
\caption{Summary of \textbf{60} experimental scenarios. Comparing hyperparameters selection strategies. Our methods are labeled $\KLE(\cdot)$.}
\label{app:fig:comparing_hyperparams}
\end{figure}

\begin{wrapfigure}{R}{0.5\textwidth}
\vspace{-10pt}
     \centering
     \includegraphics[width=.55\textwidth]{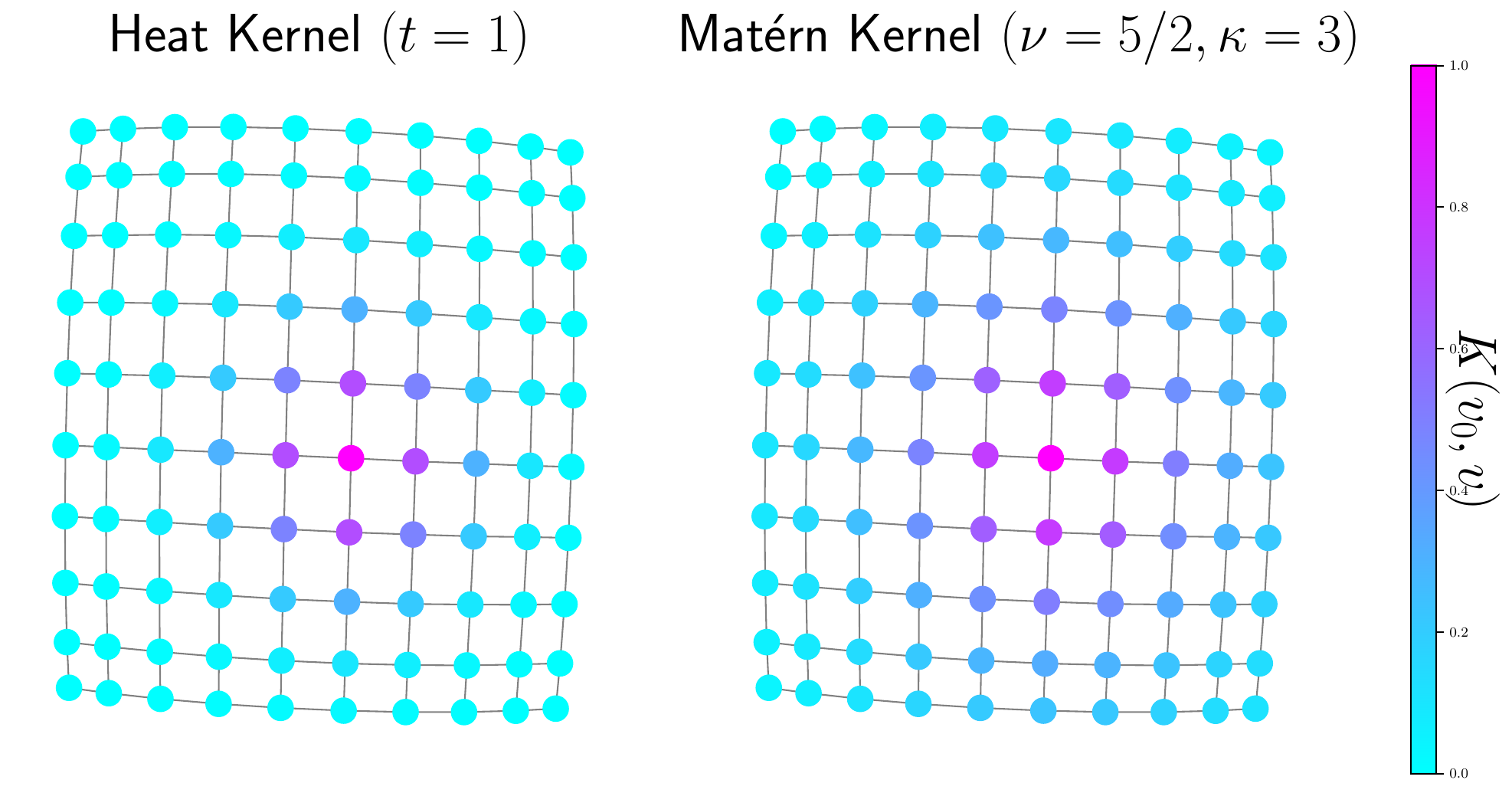}
     \caption{Heat and Mat\'ern kernels visualized on 2-d grid.}
     \label{fig:app:heat_and_matern}
\vspace{-10pt}
\end{wrapfigure}
\begin{figure}

\end{figure}

\paragraph{Prompts.}
We prompt the models to generate full sentences as answers with the following prompt:
    \texttt{Answer the following question in a single brief but complete sentence.}. 
    
Also, we have used the following prompt to check the accuracy of the responses: 

\texttt{We are assessing the quality of answers to the following question: \{\HO{question}\} \textbackslash n
The expected answer is: \{\HO{correct\_answer}\}. \textbackslash n The proposed answer is: \{\HO{predicted\_answer}\} \textbackslash n Within the context of the question, does the proposed answer mean the same as the expected answer? \textbackslash n Respond only with yes or no.\textbackslash n Response:}

Here we mark \HO{placeholders} with the orange color. Or, if several correct answers were provided, we have used the following prompt:

\texttt{We are assessing the quality of answers to the following question: \{\HO{question}\} \textbackslash n
The following are expected answers to this question: \{\HO{correct\_answers}\}. \textbackslash n The proposed answer is: \{\HO{predicted\_answer}\} \textbackslash n Within the context of the question, does the proposed answer mean the same as any of the expected answers? \textbackslash n Respond only with yes or no.\textbackslash n Response:}

\paragraph{Example.}
We visualize an example from the NQ dataset in \cref{fig:app:example}; we have used Llama-2 70B Chat for this example. In order to analyze cases where SE and KLE are inconsistent, we ranked all the answers separately by KLE and SE and found those cases where the difference between indices in the list ranked by KLE and ranked by SE is high. In \cref{fig:app:example}, a model provides the correct answer. However, SE estimates the uncertainty to be high because it can detect only two answers as equal and thus considers the majority of the answers as semantically distinct.  Instead, our method considers more fine-grained relations between the answers and provides better uncertainty estimates (i.e., orange and red cells in the weight matrix). It is an illustrative example of the cases we analyzed. It indicates that the longer and more nuanced the answers are, the more KLE would outperform SE.

\section{Additional Experimental Details}
\label{appendix:additional_exps}
In this section, we provide additional experimental results.

\textbf{Hardware and Resources.} We ran Llama 2 70B models on two NVIDIA A100 80GB GPUs, and the rest of the models on a single NVIDIA A100 80GB. The generation process took from one to seven hours depending on a model for each experimental scenario, and the evaluation additionally took roughly four hours per scenario which can be further optimized by reducing the number of hyperparameters. The project spent more resources due to other experiments. Our experimental pipeline first generates the answers for all the datasets and then computes various uncertainty measures. We did not recompute generations, but in each experimental run we only evaluated uncertainty measures. 

\textbf{Licenses.} We release our code under a clear BSD-3-Clause-Clear. The datasets used in this paper are released under CC BY 2.5 (BioASQ \citep{krithara2023bioasq}), Apache 2.0 (TriviaQA \citep{joshi2017triviaqa}),  CC BY-SA 4.0 (SQuAD \citep{rajpurkar2018know}), MIT (SVAMP, \citep{patel-etal-2021-nlp}), and CC BY-SA 3.0 (NQ \citep{kwiatkowski2019natural}).

\subsection{Models and datasets}

In \cref{fig:app:dataset_details}, we show samples from each dataset we used in the experimental evaluation of our method.
\begin{figure}[htbp!]
     \centering
     \includegraphics[width=\textwidth]{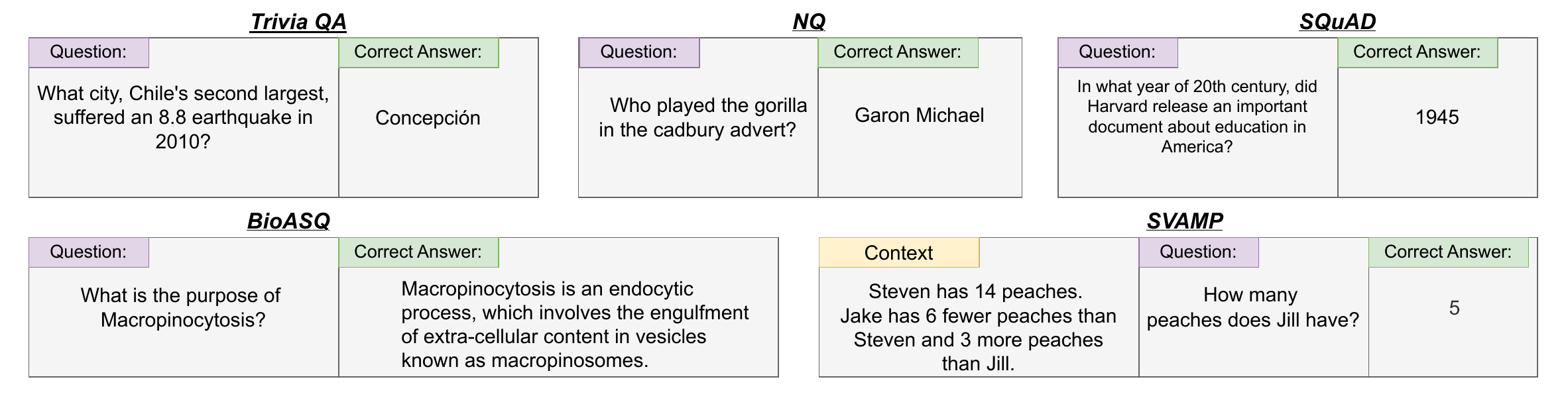}
     \caption{Samples from datasets we use: Trivia QA, NQ, SQuAD, BioASQ, and SVAMP.}
     \label{fig:app:dataset_details}
\end{figure}

\begin{figure}[t]
\centering
  \centering
  \includegraphics[width=\linewidth]{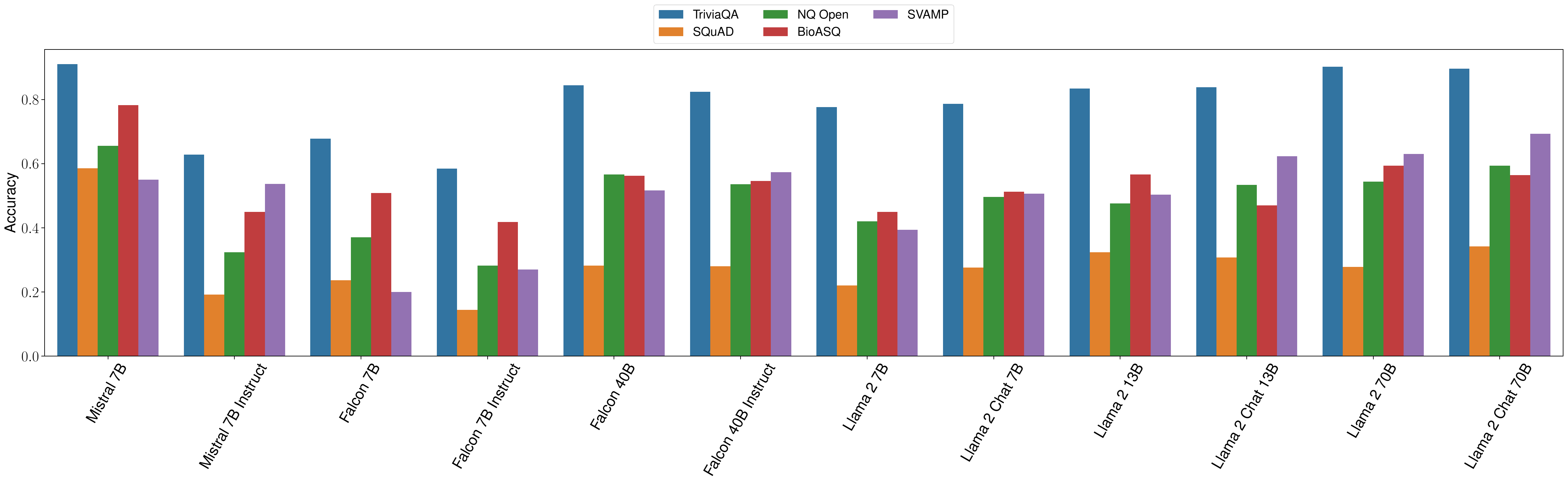}
\caption{Accuracy of the models}
\label{fig:app:accuracy_all}
\end{figure}

Additionally, we demonstrate the accuracy of the models used in the experiments on each dataset in \cref{fig:app:accuracy_all}. As can be seen, we evaluate our method on a diverse set of models with a varying level of accuracy across the tasks at hand. This is especially important for UQ, because UQ methods should perform well for all the models regardless of their downstream effectiveness. 

Real-world applications often involve deploying models with varying degrees of performance, and a robust UQ method should provide reliable uncertainty estimates for all of them. By demonstrating the efficacy of our method across a wide variety of models, we validate its applicability in diverse scenarios. This highlights that our approach can be confidently used in practical settings where model performance can fluctuate.

\subsection{Instruction-tuned and non-instruction tuned models}
Furthermore, we investigate the performance of UQ methods by splitting the set of experimental scenarios into instruction-tuned and non-instruction tuned models. We visualize the splits in \cref{app:fig:instr_vs_non_instr}. Interestingly, our approach significantly outperforms the existing methods when evaluated with instruction-tuned models, and only marginally outperforms when evaluated on non-instruction-tuned models. We can hypothesize that non-instruction tuned models are better calibrated and thus methods based on token-likelihoods make sense whereas with instruction tuning worsens calibration. This hypothesis is also supported by comparison of SE and DSE (DSE significantly outperforms SE on an instruction-tuned split, when AUROC is measured).

\begin{figure}[htbp!]
     \centering
     \includegraphics[width=\textwidth]{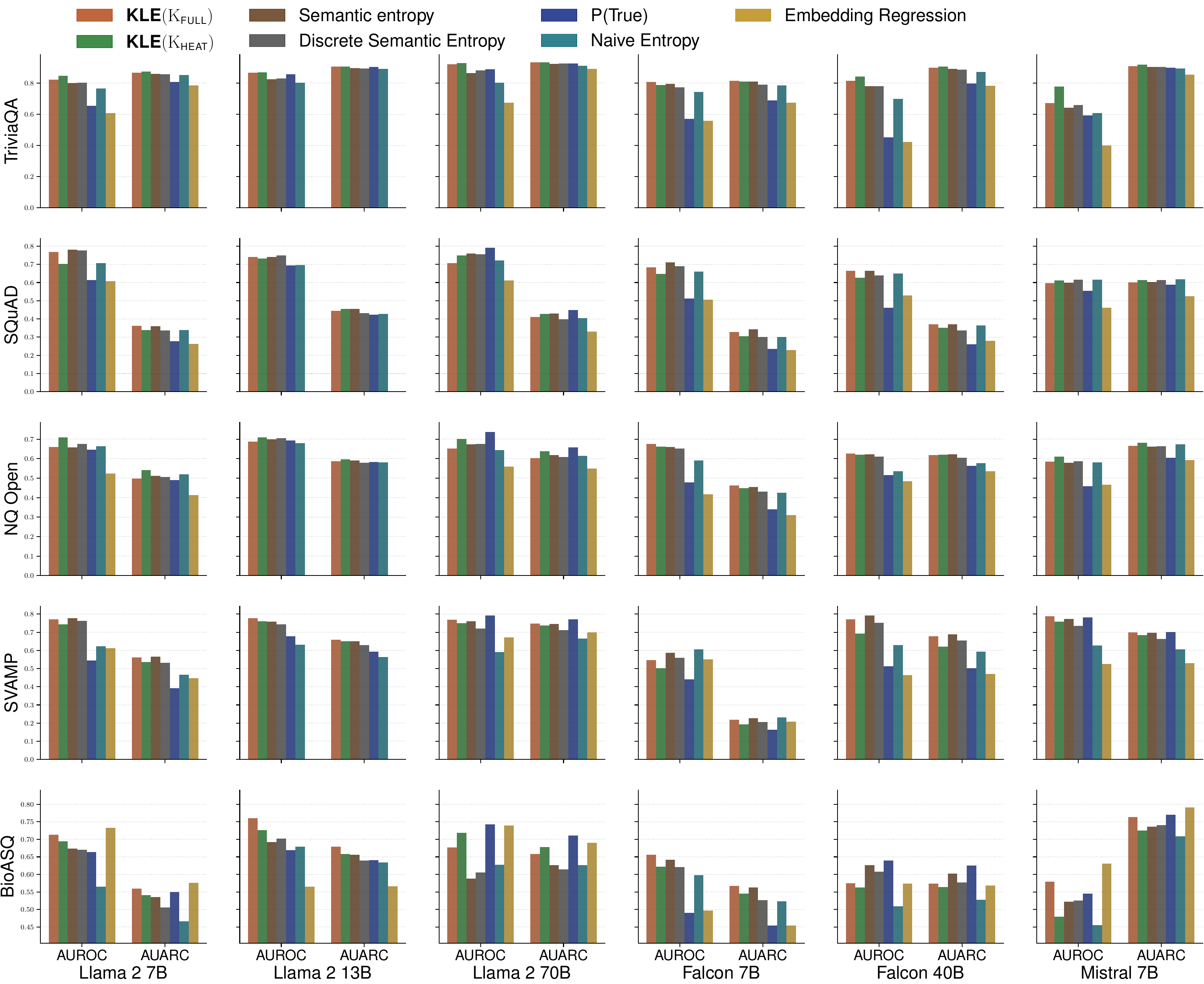}
     \caption{Full results of non-instruction tuned models}
     \label{fig:app:all_results_non_instr}
\end{figure}

\begin{figure}[htbp]
     \centering
     \includegraphics[width=\textwidth]{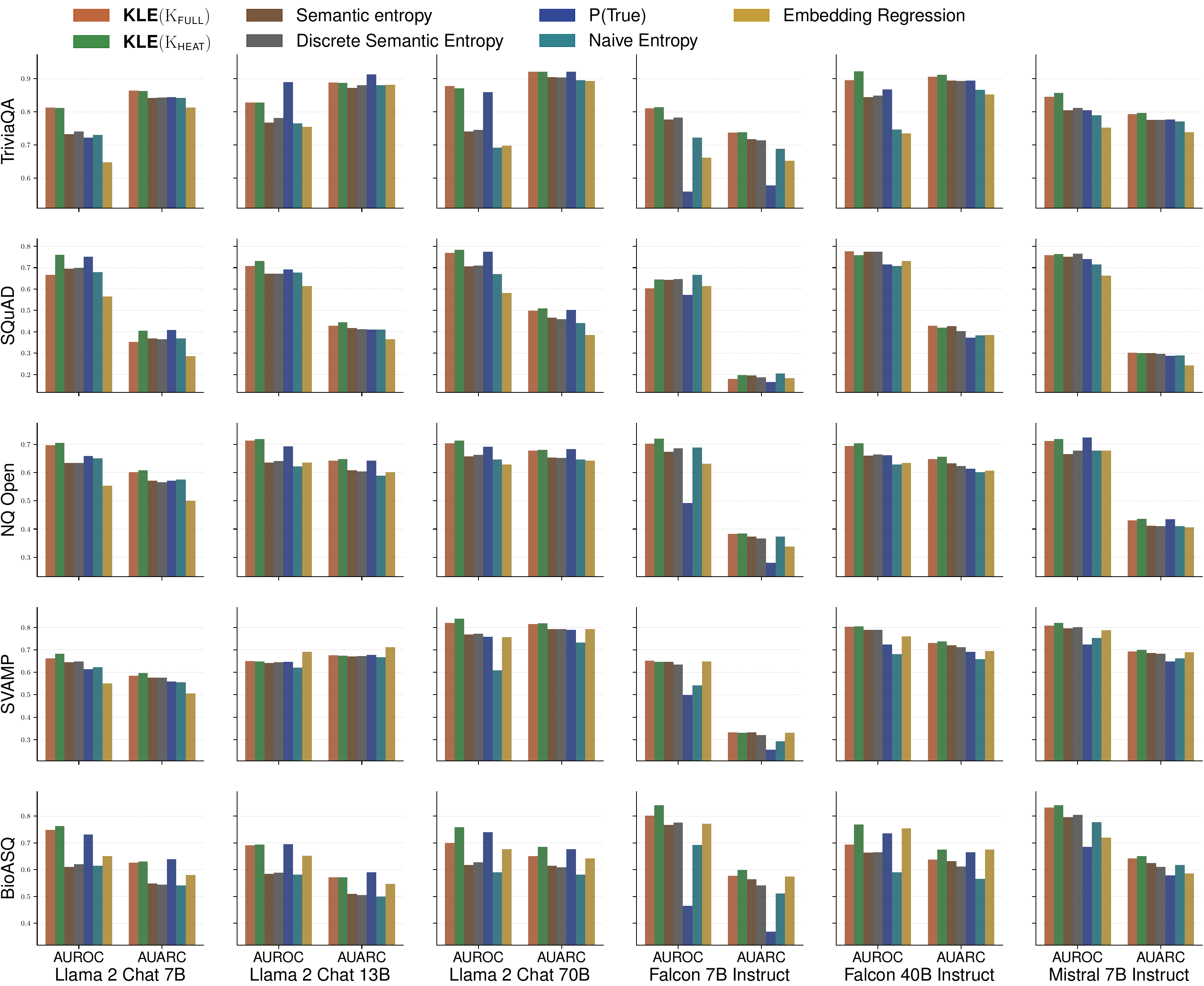}
     \caption{Full results of instruction-tuned models}
     \label{fig:app:all_results_instr}
\end{figure}

\begin{figure}[htbp!]
\centering
\begin{subfigure}{.5\textwidth}
  \centering
  \includegraphics[width=\linewidth]{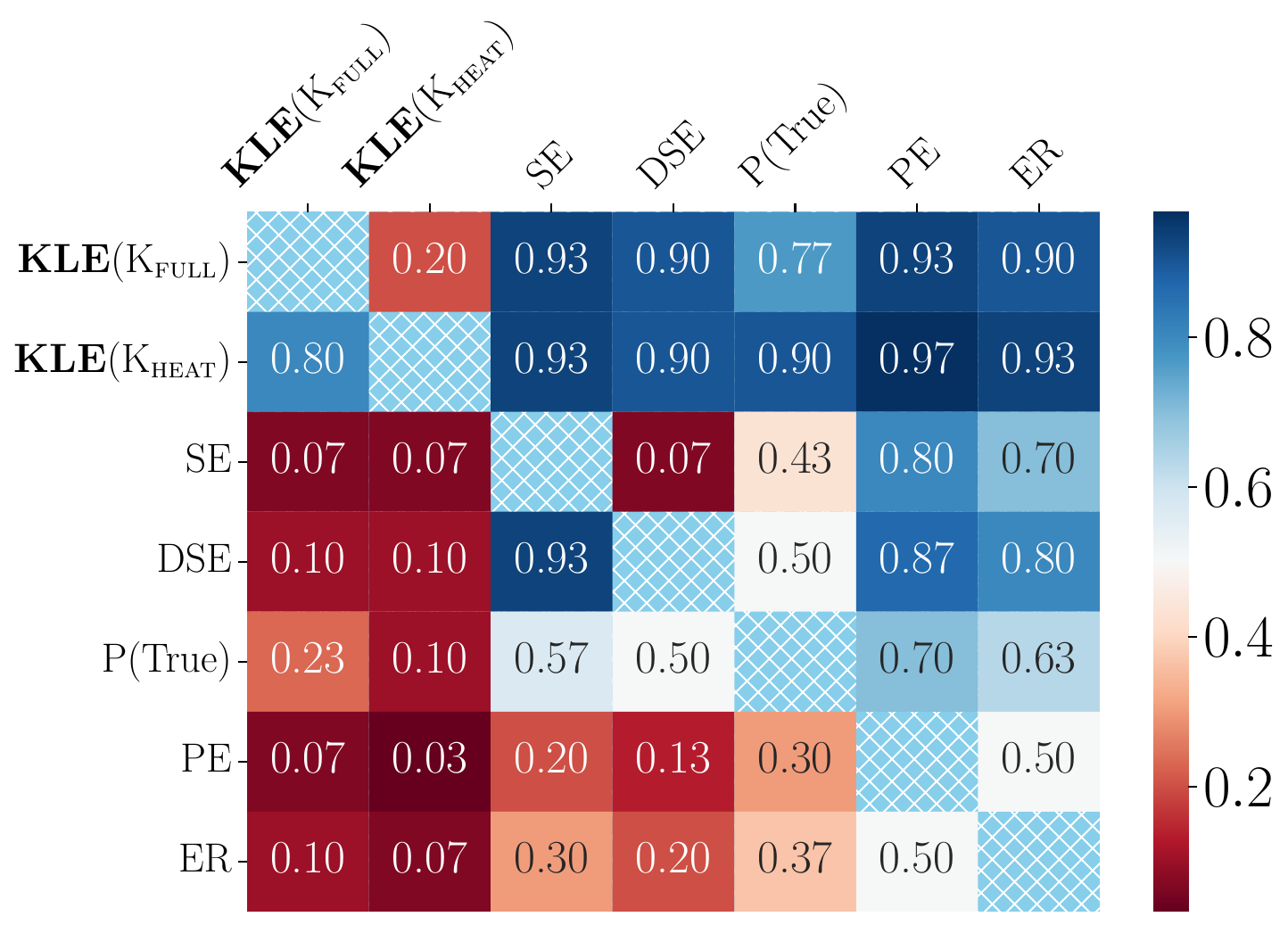}
  \caption{AUROC instruction-tuned models}
  \label{app:fig:instr_auroc}
\end{subfigure}%
\begin{subfigure}{.5\textwidth}
  \centering
  \includegraphics[width=\linewidth]{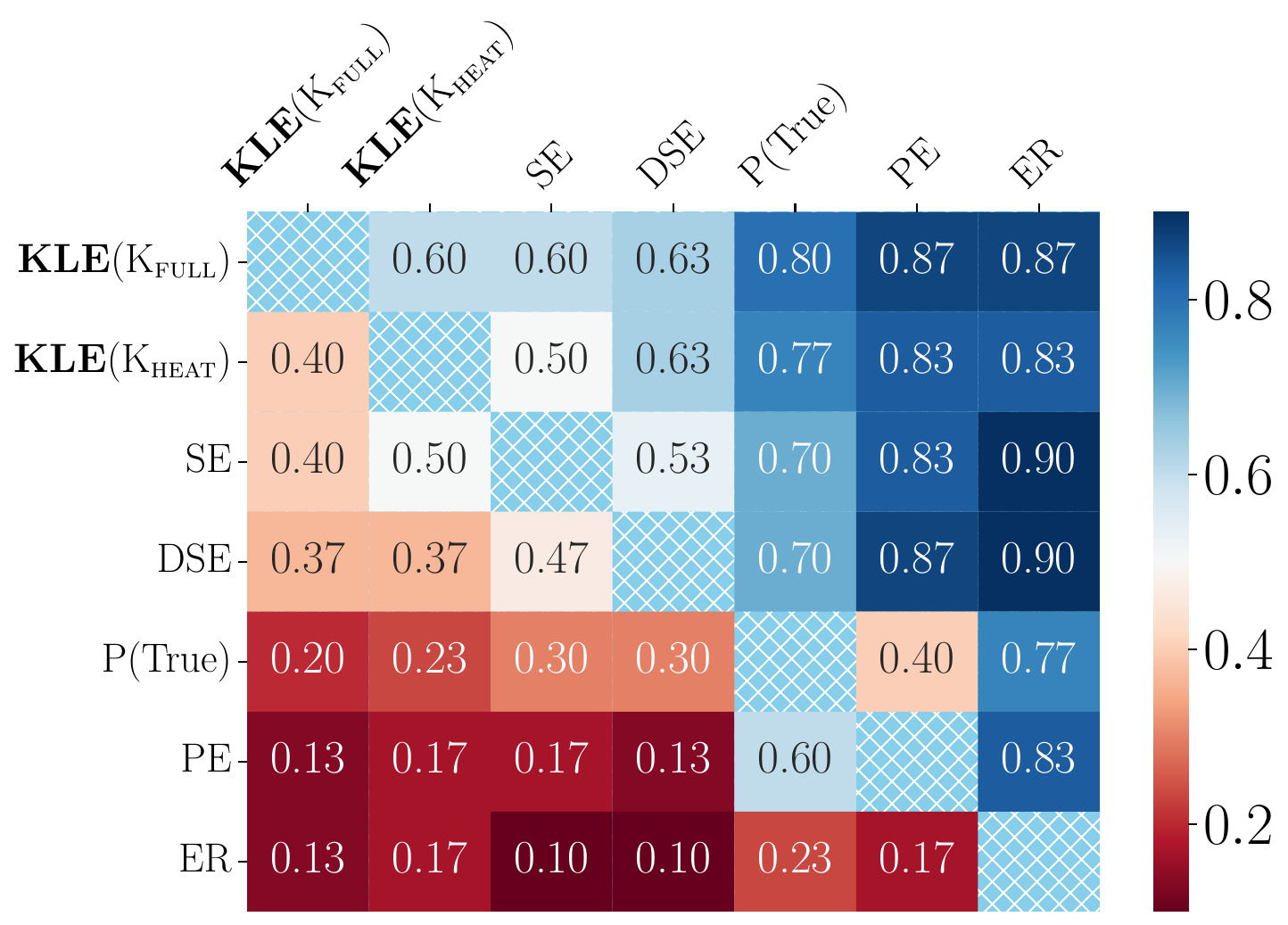}
  \caption{AUROC: non-instrunction tuned models}
  \label{app:fig:heatmap_non_instr_best_auroc}
\end{subfigure}
\vspace{1em}
\begin{subfigure}{.5\textwidth}
  \centering
  \includegraphics[width=\linewidth]{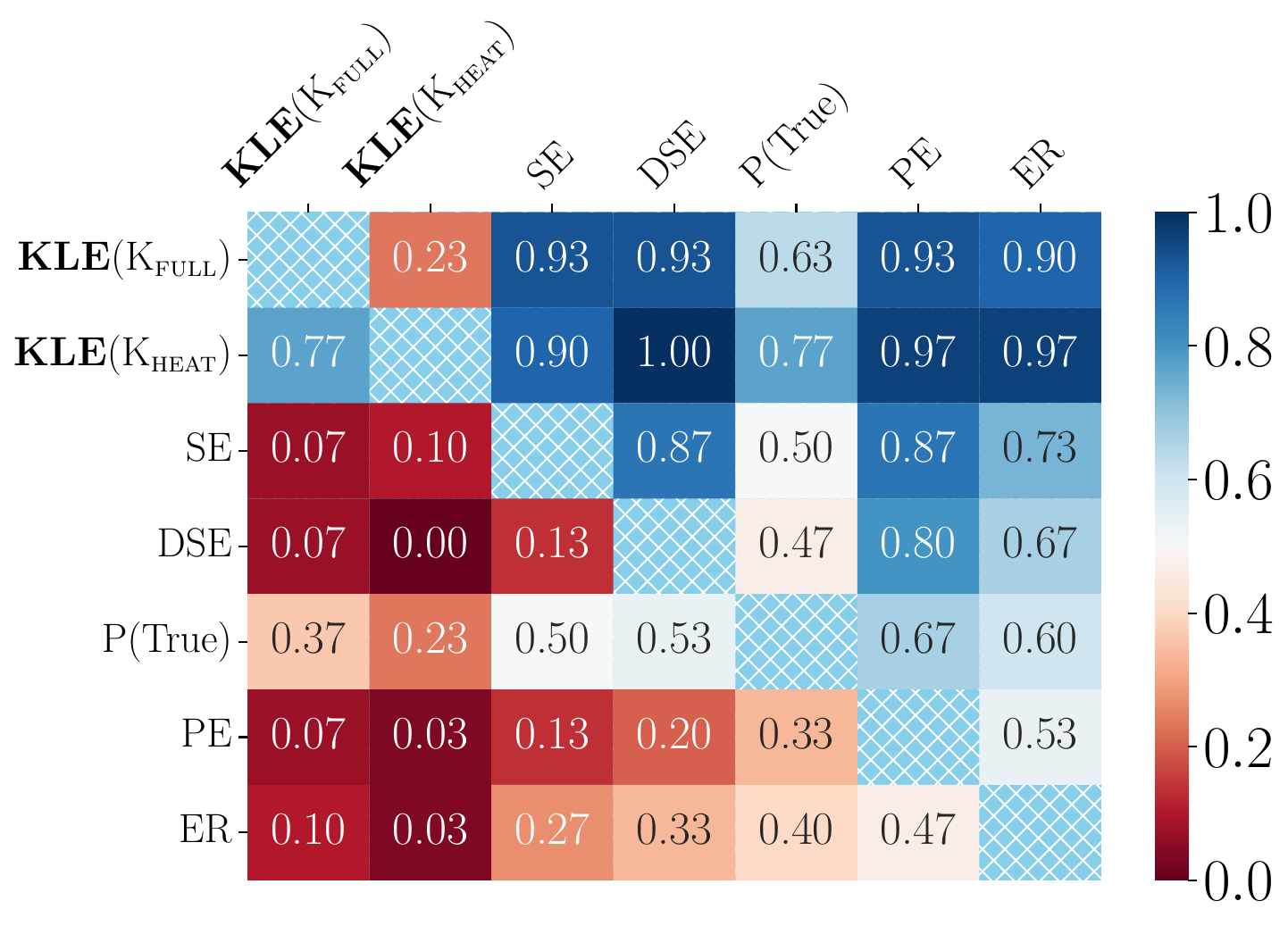}
  \caption{AUARC: instruction-tuned models}
  \label{app:fig:heatmap_instr_best_auarc}
\end{subfigure}%
\begin{subfigure}{.5\textwidth}
  \centering
  \includegraphics[width=\linewidth]{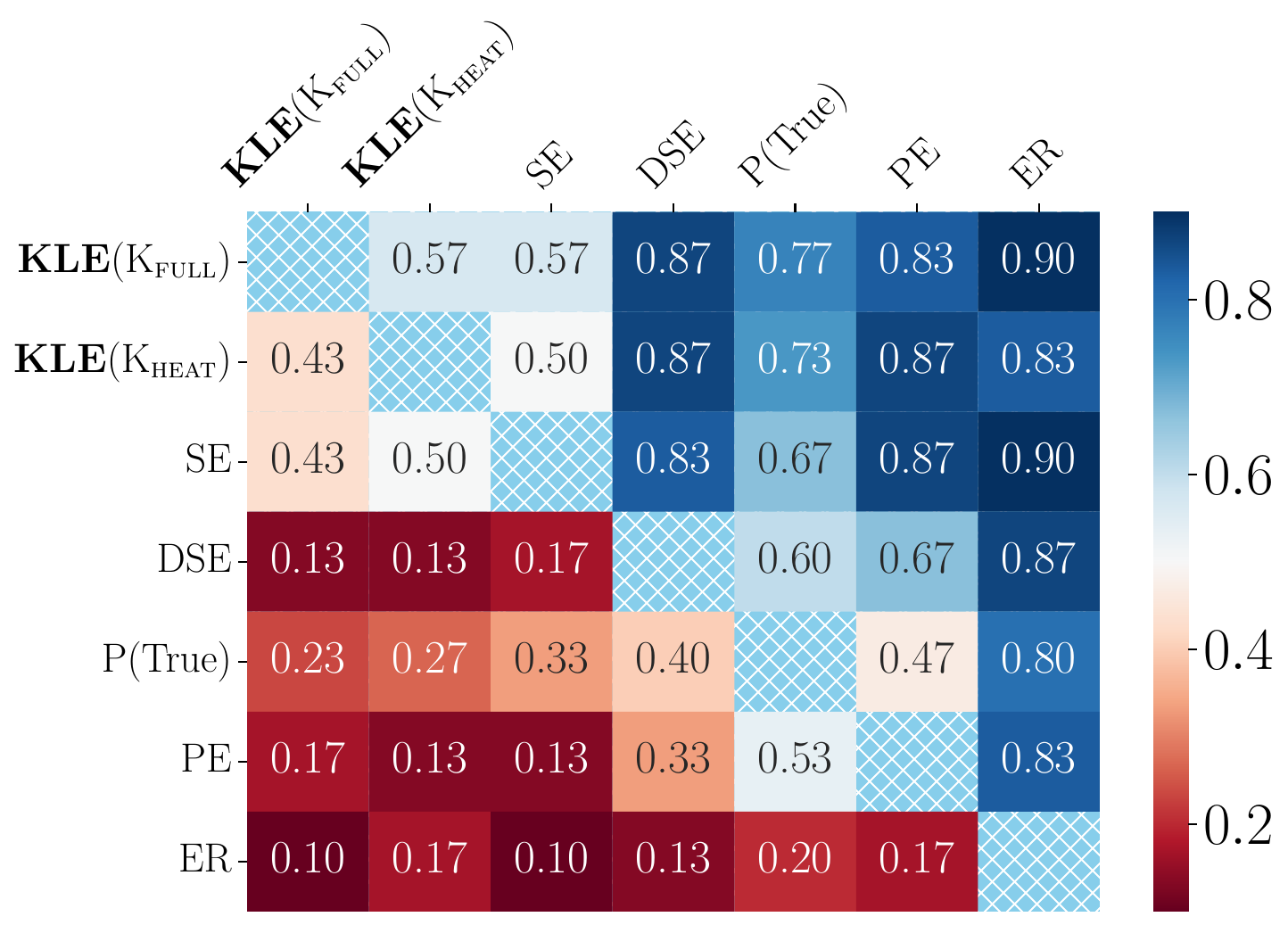}
  \caption{AUARC: non-instrunction tuned models}
  \label{app:fig:heatmap_non_instr_best_auarc}
\end{subfigure}
\caption{Summary of \textbf{60} experimental scenarios. Comparing the results on instruction-tuned and non-instruction tuned models. Our methods are labeled $\KLE(\cdot)$.}
\label{app:fig:instr_vs_non_instr}
\end{figure}

\subsection{Detailed results of UQ}
We provide a detailed comparison of our method with previous uncertainty quantification measures. In \cref{fig:app:all_results_non_instr} and \cref{fig:app:all_results_instr}, we show the results for a wide range of models across five datasets for non-instruction tuned and instruction-tuned models, respectively. We want to note that ER has failed for Llama 2 13B (non-instruction tuned version) for all datasets except BioASQ because training datasets for ER contained samples of only one class. We have assigned zero score to the failed cases.

\section{Additional Notes}

\subsection{Lexical, semantic, and syntactic variability}
\label{appendix:lex_sem_synt}

\begin{table}[h]
\label{appendix:syntactic_vs_semantic_vs_lexical}

\caption{Examples of semantic, syntactic, and lexical variability of a sentence ``Paris is the capital of France.''}
\centering
\adjustbox{max width=1\textwidth}{

\begin{tabular}{c|c|c|c}

 & Semantic Variability & Syntactic Variability & Lexical Variability \\
\hline
Paris is the capital of France. & \makecell{Rome is capital of France \\ Paris is the capital of Italy.} & The capital of France is Paris. & \makecell{France's capital is situated in Paris.\\France's capital city is Paris.} \\

\end{tabular}
}

\end{table}

We resort to the 6-level model of the structure for text analysis proposed in \citep{crystal2018cambridge} to extensively describe aspects of language beyond semantics. This model distinguishes four basic notions for text analysis: medium of transmission, grammar, semantics, and pragmatics. Medium of transmission is irrelevant to the study of language model outputs (however, it becomes relevant for multimodal foundation models that can, for instance, answer a request either with a text or an image); grammar is further divided into the syntax and morphology of the text and semantics into semantics and discourse. Another dimension is pragmatics, or how the text is used. In this work, we focus only on the semantics of the text. However, the method can be extended to other aspects of text analysis. For instance, one can design syntactic or pragmatic kernels. We leave the study of other kernel modalities to future works.

\FloatBarrier

\end{document}